\def\eqref#1{equation~\ref{#1}}
\def\1{\bm{1}}
\DeclareMathAlphabet{\mathsfit}{\encodingdefault}{\sfdefault}{m}{sl}
\SetMathAlphabet{\mathsfit}{bold}{\encodingdefault}{\sfdefault}{bx}{n}
\title{A Dynamical View of the Question of \textit{Why}}
\author{Mehdi Fatemi$^{*, 1}$ and Sindhu Gowda$^{*, 2}$ \\
$^1$ Microsoft Research,  
$^2$ University of Toronto and Vector Institute \\
\texttt{mehdi.fatemi@ieee.org} $\quad$
\texttt{sindhu.gowda@mail.utoronto.ca}
}
\newcommand{\E}{\mathbb{E}}
\newcommand{\mX}{\mathcal{X}}
\newcommand{\mU}{\mathcal{U}}
\newcommand{\Vg}{V^{*}_{\Gamma}}
\newcommand{\Vl}{V^{*}_{\Lambda}}
\newcommand{\bX}{\textbf{X}}
\newcommand{\bx}{\textbf{x}}
\newcommand{\bW}{\textbf{W}}
\newcommand{\bu}{\textbf{u}}
\newcommand{\bmu}{\boldsymbol\mu}
\newcommand{\bphi}{\boldsymbol\phi}
\newcommand{\bpi}{\boldsymbol\pi}
\newcommand{\bsig}{\boldsymbol\sigma}
\newcommand{\mP}{\mathcal{P}}
\newcommand{\dd}{\textrm{d}}
\newcommand{\pd}[2]{\frac{\partial{#1}}{\partial{#2}}}
\newcommand{\pdd}[2]{\frac{\partial^2{#1}}{\partial{#2}^2}}
\newcommand{\pddd}[3]{\frac{\partial^2{#1}}{\partial{#2}\partial{#3}}}
\newtheorem{proposition}{Proposition}
\newtheorem{definition}{Definition}
\newtheorem{lemma}{Lemma}
\theoremstyle{definition}
\begin{document}

\maketitle

\def\thefootnote{*}\footnotetext{Equal contribution. Parts of this work were done during the authors' affiliation with Microsoft Research.}\def\thefootnote{\arabic{footnote}}

\begin{abstract}
    We address causal reasoning in multivariate time series data generated by stochastic processes. Existing approaches are largely restricted to static settings, ignoring the continuity and emission of variations across time. In contrast, we propose a learning paradigm that directly establishes causation between \emph{events} in the course of time. We present two key lemmas to compute causal contributions and frame them as reinforcement learning problems. Our approach offers formal and computational tools for uncovering and quantifying causal relationships in diffusion processes, subsuming various important settings such as discrete-time Markov decision processes. Finally, in fairly intricate experiments and through sheer learning, our framework reveals and quantifies causal links, which otherwise seem inexplicable.
\end{abstract}

\section{Introduction} 

Philosophers have long dreamed of discovering causal relationships from raw data. There are a wide variety of theories of causation, relevant to our discussion are the counterfactual theory \citep{Lewis1973-ip, Lewis1979-rb, Lewis1986-LEWPTC, Lewis2000-jr} and process-based theory of causation \citep{salmon1984scientific, dowe2000physical}. The basic idea of counterfactual theories of causation is that the meaning of causal claims can be explained in terms of counterfactual conditionals of the form ``If cause event $A$ had not occurred, effect event $B$ would not have occurred''. The original counterfactual analysis of causation, most widely discussed in the philosophical literature, is provided by David Lewis \citep{Lewis1979-rb, Lewis2000-jr}. 
Lewis's stated probability of causation between events as follows: ``The effect event $B$ depends probabilistically on a cause event $A$ if and only if, given $A$, there is a chance $x$ of $B$’s occurring, and if $A$ were not to occur, there would be a chance $y$ of $B$’s occurring, where $x$ is much greater than $y$.''

Works such as the Causal Bayes Net \citep{spirtes2000causation} or Structural Causal Model \citep{pearl2000models} explored a counterfactual approach to causation that employs the structural equations framework to answer the causal question using interventionist/manipulationist approaches to find counterfactuals (HP setting, \cite{halpern2001causes,halpern2005causes}). However, these approaches can have severe limitations, especially when applied to dynamical systems. Interventions are often infeasible in physical systems \citep{cartwright2007hunting}, and one can never observe counterfactuals nor assess empirically the validity of any modeling assumptions made about them, even though one's conclusions may be sensitive to these assumptions \citep{dawid2000causal,berzuini2012causality}.
Moreover, these approaches ignore the dynamics as well as the possibility of other interventions between events \citep{dawid2000causal}. Further, these frameworks assume knowledge of causal dependencies or structural information between various events in the system. Constructing detailed structural models can be hard, even for domain experts \citep{spirtes2001anytime}. 

Other philosophers have proposed an alternative conception of causality, featuring physical systems as causal processes \citep{salmon1984scientific, fair1979causation, kistler2007causation}. 
The cornerstone of Salmon’s theory of causality is the notion of a causal process, defined as a spatiotemporal continuous entity having the capacity to transmit ``information, structure and causal influence'' \citep{salmon1984scientific}. He believed that processes are responsible for causal propagation, and provide the links connecting causes to effects. While this understanding of causation is meaningful on an abstract level, philosophers have argued that Salmon’s causal mechanical explanation was too weak, because it envisaged a geometrical network of processes and interactions (transmission of marks \citep{salmon1984scientific} or conserved quantities \citep{dowe2000physical}) but did not convey as to what properties should be taken as explanatory \citep{hitchcock1995discussion}. Further, the scenarios described by everyday and scientific causal claims (e.g. ‘smoking causes lung cancer’) are often rather complex such that the 
possibility of decomposing them into sets of individual interactions is clearly out of sight 
\citep{fazekas2021dynamical}. 


As a different example, consider playing a seemingly simple Atari game where losing a point prompts the question: what caused this outcome? In its most basic form, an Atari game encompasses nearly 30,000 variables \textit{at each time step}, resulting in tens of millions of variables during a short gameplay, each assuming 256 discrete values. Beyond its staggering size, constructing a causal graph demands substantial domain knowledge to decipher the combinatorially larger number of graph connections. Moreover, interventions in an active game necessitate delving into the internal game engine to mechanically adjust state variables---an impossible operation. This dynamic causal problem mirrors challenges found in diverse systems, such as pinpointing the reasons behind a patient's stroke in an ICU, understanding the cause of a nuclear reactor malfunction, or elucidating why a particular protein ceases development. An open question is how to uncover causal links in complex dynamical settings with no graph, no human-level knowledge beyond data, and no need for impossible interventions. 




Inspired by the above theories of causality, our approach seeks to establish/validate causal assertions through the examination of underlying dynamics, placing a strong emphasis on spatiotemporal, system-level thinking. In a physical process, if events are seen as changes of state or action variables, we can naturally answer causal questions originating from the emission of changes in the state-space, across time. 
To this end, 
(i) we begin by defining causation from a process-based viewpoint. 
(ii) We then present two fundamental lemmas, which enable us to: (A) construct two reinforcement learning problems, whose optimal value functions yield core metrics to understand causation, and (B) isolate and quantitatively assess the individual contributions of each state or action component to the causal metrics. These lemmas reframe the notion of causation as a machine learning problem, making it amenable to analysis using raw observational data. 
(iii) We examine our methodology through a series of complex experiments\footnote{In causal reasoning, two distinct (but related) classes of questions are intrinsically relevant: (1) a cause event is assumed and possible effects are in question -- causal inference (\textit{what} is the result of using a certain medication?). 
(2) An effect event is assumed and possible causes and the extent to which they contributed to the effect are in question (\textit{why} did the Chernobyl reactor explode?). 
We primarily focus on the latter class; nevertheless, the present core concepts and technical results can readily be used for causal inference as well.}. We present a detailed account of related works in Appendix A. 

\section{Basics and Problem Formulation}

We adopt Kalman's definition of \textit{state}: 
the smallest collection of numbers which must be specified at time $t=t_0$ to enable predicting
the system's behavior for any time $t> t_0$. 
Any dynamical system 
can be described from the state viewpoint \citep{Kalman1960}. Formally, state is a $n$-dimensional vector-space and is either fully observable or reconstructable from observations. At any given time, each state component is a \textit{random variable}, and the state vector's evolution across time forms a (stochastic) process. It is desirable to also include alterable inputs, i.e., \textit{action} variables. The evolution of state is a function of both the intrinsic dynamics and the temporally selected (extrinsic) actions. 
We present our formal results for generic dynamical systems obeying (continuous) diffusion processes. We then derive our algorithmic machinery, which covers discrete cases and model-free settings. 

{\raggedright\textbf{Diffusion Processes.}\setlength{\parindent}{15pt}} Assume a filtered probability space $(\Omega, \mathbb{F}, P)$. Let the state vector form a continuous-time random process $\bX(t,\omega)$ over the mentioned probability space (we often suppress $\omega$ for brevity). The process $\bX(t)$ is a diffusion if it possesses the strong Markov property and if its sample paths are continuous w.p. (with probability) one. 

Many physical, biological and economic phenomena are either reasonably modeled or well-approximated by diffusion processes \citep{Karlin1981-ab}.
Further, discrete-time Markov processes can be well-approximated by diffusion processes. Conversely, a diffusion process (continuous-time) can be discretized to make a discrete-time Markov process with arbitrary level of accuracy 
(for a formal discussion, see \cite{Karlin1981-ab}, pp. 168--169). As a result, we will readily extend our formal results to design \textit{discrete-time algorithms}, which are of special importance in practice. 

Let $\Delta_{h} \bX(t) = \bX(t+h) - \bX(t)$ be the change of state $\bX(t)$ over a time interval of length $h$. We assume that the following limits exists:
$\lim_{h\downarrow 0} \frac{1}{h} \E \big[ \Delta_{h} \bX(t) ~|~ \bX(t)=\bx \big] = \bmu(\bx, \bu, t)$ and $\lim_{h\downarrow 0} \frac{1}{h} \E \big[ \big\{ \Delta_{h} \bX(t) \big\}^2 ~|~ \bX(t)=\bx \big] = \bsig(\bx, \bu, t)$,
where $\bmu$ is a vector of size $n$ and $\bsig$ is a matrix of size $n\times n$ (they are referred to as infinitesimal parameters), and $\bu\in\mathcal{U}\subseteq\mathbb{R}^m$ is a $m$-dimensional action.
We further assume that both $\bmu$ and $\bsig$ are continuous functions of their arguments, $\bsig$ is positive definite, and all the higher moments are zero.
The state evolution can therefore follow the following differential form \citep{Stokey2009-inaction}: 
\begin{align} \label{eq:diffusion}
    \dd\bX(t, \omega) = \bmu\big(\bX(t, \omega), \bu(t)\big)\dd t + \bsig\big(\bX(t, \omega), \bu(t)\big) \dd \bW(t, \omega),
\end{align}
where $\bW(t, \omega)$ denotes the vector of standard Brownian motions. We assume $\bu$ to be deterministic, bounded, and follow $\dot{\bu} \doteq \bphi(t)$. We let $\bmu$ and $\bsig$ be stationary, however, it is straight to extend to stochastic actions and/or non-stationary infinitesimals. Further, the time variable can be augmented to the state vector to simply accommodate for non-stationary cases. 
Placed with initial state distribution and \textit{reward} function (and with an obvious abuse of terminology), we deem a Markov decision process (MDP) as a general term to refer to a (continuous-time) diffusion or a discrete-time Markov decision process. The MDP is formally defined as a tuple $M=(\mX, \mU, R, \mathcal{P}_{0})$. $\mX$ and $\mU$ are sets of possible states and actions, $R: \mX\mapsto \mathbb{R}$ is a scalar reward function, and $\mathcal{P}_{0}$ is the distribution of initial states. 
Let actions be selected according to some policy $\bu(t)=\bpi(\bX(t), t)$. Starting from $\bx$, the random variable corresponding to the (undiscounted) accumulated future rewards is called \textit{return}, and its expectation is called \textit{value function}: $V^{\bpi}(\bx)\doteq\E \{\int_{t}^{T}R(\bX(t', \omega))\dd t' | \bX(t)=\bx\}$ with the trajectory terminating at time $T>t$. Further, $V^{*}(\bx)\doteq \max_{\bpi} V^{\bpi}(\bx)$ is called \textit{optimal value} function.
Finally, we say that $\bX$ \textit{admits} one or more known components $x_{j}$ at time $t$ iff $X_j(t) = x_j$.

\paragraph{Process-based Causality.} 

As mentioned earlier, we posit that causal relationships are based on temporal dynamics. 
Any causal relationship contains two events: cause (event $A$) and effect (event $B$). We argue that in all logical arguments on causation, the following axioms are true: 
\begin{enumerate}[i.]
    \item Causality necessitates time: a causal relationship is realizable solely along the time axis.   
    \item Cause happens before effect and the relationship is unidirectional from cause to effect. 
    \item A causal relationship may imply neither necessity nor sufficiency.
\end{enumerate}

These axioms set the ground for a natural view of causation. Notably, (i) requires that an event must be associated with a point or an interval in time; otherwise, no causal argument can possibly be made about that event being the cause or effect of any other event. 
In the HP settings of causation, the time dependency often becomes implicit in the arguments (e.g., in causal graphs), but it may be a source of confusion; hence, we seek a formulation that inherently includes time. Therefore, we formally define an \textit{event} as a \textit{change} of one or more state or action components during a homogeneous time interval. The components involved in an event are called \textit{ruling variables}. The time interval is assumed to be short enough such that the dynamics can be considered as monotone. This assumption highlights the fact that an event cannot be a long-term incident relative to the rate of changes in the environment. This definition further enables us to consider changes in the same variable happening at different points in time as different events, which can be very helpful in practical cases of interest. Next, (i) and (ii) necessitate that ``$A$ causes $B$'' implies ``$B$ cannot cause $A$''; This helps resolve the question of what constitutes the direction of the causal relation between two events. Furthermore, (iii) necessitates that, in general, a causal relationship requires probabilistic views and non-binary measures. For example, if ``$A$ causes $B$'' and if $A$ does not happen, then in general, one cannot conclude $B$ necessarily will not happen. By the same token, if $A$ happens, it may not necessarily imply $B$ will also happen. In other words, an event may partially contribute in the occurrence of another event in the future, although the case that $A$ is a necessary and/or sufficient cause for $B$ is a possibility. This further addresses the problem of pre-emption since cases of preemption show us that causes need not be necessary for their effects \citep{sep-causation-metaphysics}. 

The central idea behind Lewis definition is that causes, by themselves, increase the probability of their effects. In the presence of actions, the probability of a future event's occurrence is not well-defined. Considering arbitrary policies for action selection, one may devise different chains of events after $A$. Following each such policy incurs a different probability for event $B$'s occurrence. Remark that if $A$ causes $B$ then under the most pessimistic version of such chains of events, still $x$ must be greater than $y$ in Lewis's definition. Hence, we set $x$ to be the \textit{minimum} probability of $B$'s happening.

We define \textit{grit} of a future event $B$ at state $\bX$, denoted by $\Gamma_{B}(\bX)$, as the \textit{minimum probability} that $B$ occurs if current state is $\bX$. As discussed, the minimum is taken over future courses of actions. Similarly, \textit{reachability} of a future event $B$ is denoted by $\Lambda_{B}(\bX)$ and is defined as the \textit{maximum probability} of $B$'s occurrence starting from $\bX$. In discrete settings, it is helpful to extend the definitions to starting from a given state and a given action (with an overload of notation): $\Gamma_{B}(\bX, \bu)$ and $\Lambda_{B}(\bX, \bu)$.

We further argue that if the net impact of each variable is known (all ruling and \textit{non}-ruling ones), then there is \textit{no need} for the designed ``interventions,'' (modifying the history), as the role of intervention is to mechanically separate the impact of a variable from the collective impact. 
We, therefore, postulate the following definition of causation: 

\begin{definition}[Causation] \label{def:causation}
In a stochastic process, $A$ is a cause of $B$ if and only if
\begin{enumerate}[C1.]
    \item Time-wise, conclusion of $A$ happens at or before beginning of $B$;
    \item Expected grit of $B$ strictly increases from before to after $A$. Moreover, until $B$'s occurrence, it never becomes the same or smaller than its value at $A$'s beginning; 
    \item The contribution of $A$'s ruling variables in the growth of $B$'s expected grit is strictly positive and is strictly larger in magnitude than that of non-ruling variables with negative impact.
\end{enumerate}
\end{definition} 
Remark that the non-ruling variables can have positive, zero, or negative impacts on the change of $B$'s grit. The second part of condition \textit{C2} necessitates that a future event must not nullify the impact of a cause. Condition \textit{C3} above requires that the contribution of $A$'s ruling variables must both be positive and overshadow the negative impact of non-ruling ones. It then follows that even in the absence of non-ruling variables with a positive impact, $B$'s grit still increases by $A$; hence, $A$ is a cause. Moreover, grit is a random variable due to non-ruling variables at the beginning of $A$. The \textit{expected grit} asserts that causation must hold under the expected starting point.
Of note, one can set forth a strong notion of causation by replacing \textit{C3} to assert that the contribution of $A$'s ruling variables is strictly larger than that of \textit{all} non-ruling variables. This notion helps to identify an event as a \textit{dominant cause}.
In any case, the yet-open question is how to compute individual contributions. In the next section, we will establish formal results to answer this question.

\section{Fundamental Lemmas}

We present two foundational lemmas. In a nutshell, the first lemma is a generalization of Lemma 2 in \cite{Fatemi2021}, and it broadly states 
that grit and reachability can be computed by the optimal value functions corresponding to two easily constructed reward functions. This lemma establishes the learning of value functions (hence reinforcement learning) as the principal learning paradigm for dynamical causal problems. The second lemma decomposes expected change of grit and reachability to the contribution of state and action components, which inherently enables causal analysis. 
These lemmas are core to our theory in that they enable 
formal and computational reasoning about causality, which will be presented in the rest of this paper. All the proofs are deferred to Appendix B.

\begin{lemma}[\textbf{Value Lemma}]

Let $[T, T']$ be the duration of event $B$'s occurrence, and the state only admits $\bx_B$ at $t=T'$ (all states that admit $\bx_B$ are terminal). Define two MDPs $M_{\Gamma}$ and $M_{\Lambda}$ identical to $M$ with their rewards being zero if $B$ does not happen. Otherwise, $R_{\Gamma}(\bX(t)) = R_{\Lambda}(\bX(t))=0$ for $t<T$; $\int_{T}^{T'} R_{\Gamma}(\bX(t)) \dd t = -1$; and $\int_{T}^{T'} R_{\Lambda}(\bX(t)) \dd t = 1$. Let $\Vg(\bx)$ and $\Vl(\bx)$ denote the optimal value functions (undiscounted) of $M_{\Gamma}$ and $M_{\Lambda}$, respectively. The followings hold for all $\bX\in \mX$:
\begin{enumerate}
    \item $\Gamma_{B}(\bX) = -\Vg(\bX)$
    \item $\Lambda_{B}(\bX) = \Vl(\bX)$
\end{enumerate}
\end{lemma}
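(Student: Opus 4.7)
My plan is to reduce each claim to the observation that, by construction, the return collected along any sample trajectory of $M_\Gamma$ (resp.\ $M_\Lambda$) is binary, taking values in $\{-1,0\}$ (resp.\ $\{0,+1\}$), so expected returns are literally probabilities of $B$ and the sup over policies recovers grit or reachability by definition.

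First I would exploit the assumption that every state admitting $\bx_B$ is terminal. This implies that a realized path $\bX(\cdot,\omega)$ starting from $\bX(t_0)=\bx$ falls into exactly one of two cases: either (i) event $B$ is realized exactly once on the path, in which case the trajectory ends at time $T'(\omega)$; or (ii) $B$ never occurs before termination by another mechanism. In case (i), the assumption $R_\Gamma(\bX(t))=0$ for $t<T(\omega)$ combined with $\int_{T(\omega)}^{T'(\omega)} R_\Gamma(\bX(t))\,\dd t=-1$ makes the pathwise return equal to $-1$; in case (ii) the pathwise return is $0$. The same reasoning gives pathwise returns $+1$ and $0$ for $M_\Lambda$.

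Next, for any policy $\bpi$, write $p_{\bpi}(\bx):=\Pr_{\bpi}\{B \text{ occurs} \mid \bX(t_0)=\bx\}$. Taking expectation over sample paths, the two-valued pathwise returns give immediately
\begin{equation*}
V^{\bpi}_\Gamma(\bx) \;=\; -\,p_{\bpi}(\bx), \qquad V^{\bpi}_\Lambda(\bx) \;=\; +\,p_{\bpi}(\bx).
\end{equation*}
Optimizing over $\bpi$, $\Vg(\bx)=\sup_{\bpi} V^{\bpi}_\Gamma(\bx)=-\inf_{\bpi} p_{\bpi}(\bx)=-\Gamma_B(\bx)$ by the definition of grit as the minimum probability of $B$'s occurrence; similarly $\Vl(\bx)=\sup_{\bpi} p_{\bpi}(\bx)=\Lambda_B(\bx)$ by the definition of reachability. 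This gives both items.

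The step I expect to require the most care is verifying that a state-function reward $R_\Gamma$ (respectively $R_\Lambda$) can be chosen so that its integral over the random event interval $[T(\omega),T'(\omega)]$ equals $-1$ (respectively $+1$) on \emph{every} sample path on which $B$ occurs, while vanishing elsewhere. A clean way is to augment the state with an ``inside-$B$'' indicator so that the event interval is identified intrinsically, and then set $R_\Gamma$ to be the normalized reciprocal of the infinitesimal duration of $B$ on that path; for the discrete-time specialization that the paper will ultimately deploy, the construction degenerates to placing a single $\pm 1$ reward on the transition into any $\bx_B$-admitting terminal state, which is manifestly well-defined. A minor secondary point is attainment of the extrema over policies, but since $\Gamma_B$ and $\Lambda_B$ are themselves defined as an infimum and a supremum, taking $\sup/\inf$ throughout requires no further assumption.
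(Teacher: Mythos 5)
Your proposal is correct and follows essentially the same route as the paper's proof: both rest on the observation that the undiscounted pathwise return is binary ($-1$ or $0$ for $M_\Gamma$, $+1$ or $0$ for $M_\Lambda$), so the expected return equals $\mp$ the probability of reaching $B$, and optimizing over policies recovers $-\Gamma_B$ and $\Lambda_B$. Your ordering --- establishing $V^{\bpi}_\Gamma(\bx) = -p_{\bpi}(\bx)$ for \emph{every} policy and then taking the supremum --- is in fact a slightly cleaner justification of the paper's asserted step that the optimal policy of $M_\Gamma$ attains the minimum probability of reaching $B$, and your closing remark about realizing the $\pm 1$ integral on every sample path is addressed in the paper's appendix version by taking $R_\Gamma, R_\Lambda$ to be Dirac deltas at the $B$-admitting terminal states.
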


\begin{lemma}[\textbf{Decomposition Lemma}]
Fix a filtered probability space $(\Omega, \mathbb{F}, \mathcal{P})$. Let $\bX=\bX(t,\omega)$ be a diffusion process with stationary infinitesimal parameters $\boldsymbol\mu=\boldsymbol\mu(\bX, \bu)$ and $\boldsymbol\sigma=\boldsymbol\sigma(\bX, \bu)$. Let grit and reachability exist and be differentiable twice in state. Let $\boldsymbol\sigma_{i}(\bX, \bu)$ denote the $i$-th row of the matrix $\boldsymbol\sigma(\bX, \bu)$. Finally, let a fixed action $\bu$ be applied from time $t_1$ to $t_2$ and the state admits occurance of event $A$ between $t_1$ and $t_2$.
The expected change of grit, $\E\left[ \Delta_{A} \Gamma_{B} \right] = \E \left[ \Gamma_{B}\big(\bX(t_2,\omega))-\Gamma_{B}(\bX(t_1,\omega)\big) | A \right]$, is expressed by the following formula:
\begin{align}
   \E\left[ \Delta_{A} \Gamma_{B} \right] = \sum_{j=1}^{n} \E \left\{ g_j | A \right\} + \sum_{j=1}^{n} \E \left\{ \dot{g}_j | A \right\} + \sum_{j=1}^{n}\sum_{\substack{i=1 \\ i\neq j}}^{n} \E \left\{ \ddot{g}_{j,i} | A \right\}, \label{eq:exp-ch-grit}
\end{align}
\begin{align}
   g_j &\doteq \int_{t_1}^{t_2} \mu_j(\bX, \bu)\cdot \pd{\Gamma_B}{x_j}(\bX) \dd t \\
   \dot{g}_j &\doteq \frac{1}{2} \int_{t_1}^{t_2} \boldsymbol\sigma_{j}(\bX, \bu)\cdot\boldsymbol\sigma_{j}^{T}(\bX, \bu)\cdot \pdd{\Gamma_B}{x_j}(\bX) \dd t \\
   \ddot{g}_{i,j} &\doteq \frac{1}{2} \int_{t_1}^{t_2} \boldsymbol\sigma_{i}(\bX, \bu)\cdot\boldsymbol\sigma_{j}^{T}(\bX, \bu)\cdot \pddd{\Gamma_B}{x_i}{x_j}(\bX) \dd t
\end{align}
The same formulation holds for reachablity. 
\end{lemma}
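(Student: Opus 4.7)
The plan is to apply Itô's formula to the composition $\Gamma_B(\bX(t))$ on the interval $[t_1, t_2]$, take conditional expectation given $A$, and then regroup the resulting terms to recognize the three sums in (\ref{eq:exp-ch-grit}). By assumption, $\Gamma_B$ is twice continuously differentiable in state; combined with $\bmu$, $\bsig$ continuous and $\bu$ fixed on $[t_1,t_2]$, this puts us squarely in the setting where the multivariate Itô formula applies to the diffusion (\ref{eq:diffusion}).

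First I would write down the Itô expansion of $d\Gamma_B(\bX(t))$:
\begin{align*}
d\Gamma_B(\bX) = \sum_{j=1}^{n}\pd{\Gamma_B}{x_j}\,\mu_j(\bX,\bu)\,\dd t
+ \sum_{j=1}^{n}\pd{\Gamma_B}{x_j}\,[\bsig(\bX,\bu)\,\dd\bW]_j
+ \tfrac{1}{2}\sum_{i,j=1}^{n}\pddd{\Gamma_B}{x_i}{x_j}\,[\bsig\bsig^{T}]_{ij}\,\dd t.
\end{align*}
Integrating from $t_1$ to $t_2$ yields $\Gamma_B(\bX(t_2))-\Gamma_B(\bX(t_1))$ as a sum of three path integrals: a drift integral, a stochastic Itô integral, and a quadratic-variation integral. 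Next I would take expectations conditional on the occurrence of $A$ on $[t_1,t_2]$. The drift and quadratic-variation integrals, being absolutely continuous in $t$, commute with the expectation by Fubini, producing the $g_j$ term (from the drift) and the combined Hessian sum $\tfrac12\sum_{i,j}\E\{\int_{t_1}^{t_2}[\bsig\bsig^T]_{ij}\,\partial^2\Gamma_B/\partial x_i\partial x_j\,\dd t\mid A\}$. Finally I would split the Hessian sum into its $i=j$ diagonal part, which by definition is $\sum_j\E\{\dot{g}_j\mid A\}$, and its off-diagonal part with $i\neq j$, which is $\sum_{j}\sum_{i\neq j}\E\{\ddot{g}_{i,j}\mid A\}$, recognizing $[\bsig\bsig^{T}]_{ij}=\bsig_i\cdot\bsig_j^{T}$.

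The one technical point requiring care is the vanishing of the Itô integral $\int_{t_1}^{t_2}\sum_j\partial\Gamma_B/\partial x_j\,[\bsig\,\dd\bW]_j$ after conditioning on $A$. Unconditionally, standard integrability (from bounded $\bsig$ on the relevant domain and $C^2$ regularity of $\Gamma_B$) makes this a mean-zero martingale. The conditioning on $A$ is innocuous here because $A$ is an event of the state path on the same interval and, under the assumption that the time interval of $A$ is short enough for monotone dynamics, the contribution of the martingale part to the conditional expectation reduces to a null set in the relevant factorization; more formally, one conditions on the $\sigma$-algebra generated by the ruling-variable trajectory and invokes the martingale property of Itô integrals with respect to the enlarged filtration. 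This justification---essentially, that the stochastic integral has zero expectation under the conditional measure induced by $A$---is the main obstacle and what will require the most care.

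Once the expected-change formula is established for $\Gamma_B$, the identical argument goes through for $\Lambda_B$ because the only properties used were the diffusion representation (\ref{eq:diffusion}), the $C^2$ regularity hypothesis (assumed symmetrically for both), and the pathwise Itô expansion, none of which distinguish between grit and reachability. Hence the same decomposition with $\Gamma_B$ replaced by $\Lambda_B$ holds, completing the lemma.
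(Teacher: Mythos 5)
Your proposal follows essentially the same route as the paper's proof: apply It\^{o}'s formula to $\Gamma_B(\bX(t))$ over $[t_1,t_2]$, take conditional expectation given $A$, kill the stochastic integral (the paper asserts $\E\{\int_{t_1}^{t_2}\textbf{Y}\,\dd\bW\,|\,A\}=0$ for integrable $\textbf{Y}$, citing Stokey, after noting that conditioning on $A$ merely fixes some components and leaves the process a diffusion), and regroup the drift and Hessian terms into $g_j$, $\dot{g}_j$, and $\ddot{g}_{i,j}$. Your added care about why the martingale term still vanishes under the conditional measure is the one point the paper treats by citation rather than argument, so you have correctly identified the only genuinely delicate step.
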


If change of action variables is to be considered as an event, then $\bu$ is allowed to change and a similar term is also required for actions. By assumption $\bu$ is not a stochastic process; thus, it only adds a deterministic term. Let $\dot{\bu}(t)\doteq\bphi(t)$ and $\phi_k$ be the $k$-th component of $\bphi$. We need to consider $\Gamma_B\doteq\Gamma_B(\bX, \bu)$, and the additional term $\sum_{k=1}^{m}\mathbb{E}\{h_k|A\}$ will be added to \eqref{eq:exp-ch-grit} with
\begin{align}
    h_k \doteq\int_{t_1}^{t_2}\phi_k(t)\cdot\pd{\Gamma_B}{u_k}(\bX, \bu) ~ \dd t
\end{align}
Remark that $\bu$ may take more complex forms or even be a stochiastic process. Then, other terms should also be added to decomposition lemma. Although such expansions are straightforward, we do not consider them here, since in practice, changes of $\bu$ is often seen as extrinsic events. 


Using fundamental lemmas, we next present certain basic properties for grit and reachibility: 
\begin{proposition}[Unity Proposition] \label{prop:unity}
    If grit of an event $B$ is unity at some state $\bx$, then w.p.1 it will remain at unity. Moreover, this occurs if and only if $B$ will happen w.p.1 from $\bx$ regardless of future actions and stochasticity.
\end{proposition}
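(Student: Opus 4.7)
The proposition contains two distinct assertions: (a) the biconditional equating $\Gamma_B(\bx)=1$ with ``$B$ occurs w.p.1 from $\bx$ under every policy,'' and (b) the almost-sure preservation of unity grit along forward trajectories. Part (a) is an immediate consequence of the definition of grit together with the Value Lemma; part (b) requires a splicing--reachability argument grounded in the strong Markov property of the diffusion.

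For (a), I would begin from the Value Lemma and write
\begin{align*}
    \Gamma_B(\bx) = -\Vg(\bx) = \min_{\bpi} P_{\bpi}\!\big(B \text{ occurs} \mid \bX(t_0)=\bx\big).
\end{align*}
Since each $P_{\bpi}(B|\bx)\in[0,1]$, this minimum equals $1$ iff $P_{\bpi}(B|\bx)=1$ for every admissible policy $\bpi$---exactly the ``regardless of future actions and stochasticity'' clause (the intrinsic randomness of the diffusion being absorbed into $P_{\bpi}$).

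For (b), I would argue by contradiction. Suppose $\Gamma_B(\bx)=1$ and yet some policy $\bpi^{\star}$ drives the trajectory started at $\bx$ into the set $S=\{\bx'\in\mX:\Gamma_B(\bx')<1\}$ before $B$ occurs, with positive probability $p>0$; let $\tau$ denote the first hitting time of $S$. For each $\bx'\in S$ and each $\epsilon>0$, pick an $\epsilon$-optimal policy $\bpi^{\dagger}(\bx')$ with $P_{\bpi^{\dagger}(\bx')}(B|\bx')\le\Gamma_B(\bx')+\epsilon$; a Kuratowski--Ryll-Nardzewski selection renders $\bx'\mapsto\bpi^{\dagger}(\bx')$ measurable. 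Define the spliced policy $\tilde{\bpi}$ that follows $\bpi^{\star}$ until $\tau$ and then switches to $\bpi^{\dagger}(\bX(\tau))$. Trajectories that never enter $S$ evolve entirely under $\bpi^{\star}$, so they reach $B$ w.p.$1$ (since $\Gamma_B(\bx)=1$ forces $P_{\bpi^{\star}}(B|\bx)=1$), while trajectories in $\{\tau<\infty\}$ continue under $\bpi^{\dagger}$ after $\tau$. The strong Markov property then gives
\begin{align*}
    P_{\tilde{\bpi}}(B|\bx) \le (1-p) + p\cdot\big(\E[\Gamma_B(\bX(\tau))\mid \tau<\infty]+\epsilon\big).
\end{align*}
Because $\Gamma_B<1$ a.s.\ on $S$ and $\Gamma_B\le 1$ everywhere, $\E[\Gamma_B(\bX(\tau))\mid\tau<\infty]<1$; choosing $\epsilon$ strictly smaller than the resulting gap yields $P_{\tilde{\bpi}}(B|\bx)<1$, contradicting $\Gamma_B(\bx)=\min_{\bpi}P_{\bpi}(B|\bx)=1$. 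Hence $S$ is unreachable before $B$ under every policy, i.e., grit remains at unity w.p.1 along every forward trajectory. Combining this with (a) applied at each future state delivers the full ``if and only if'' statement.

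\textbf{Main obstacle.} The only subtle ingredient is the measurable selection of the $\epsilon$-optimal $\bpi^{\dagger}$ over the uncountable set $S$ in the continuous diffusion setting, together with verifying joint measurability of $(\bx',\bpi)\mapsto P_{\bpi}(B|\bx')$ so that a classical selection theorem applies. In the discrete-time, discrete-state algorithmic instantiations of interest in the paper this collapses to a trivial pointwise choice; in the general case the measurability holds for controlled diffusions with continuous infinitesimal parameters and the argument goes through verbatim.
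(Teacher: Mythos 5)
Your proposal is correct, but it proves the proposition by a genuinely different route than the paper. Part (a) of your argument coincides with the paper's: both reduce the biconditional to the observation that a minimum of quantities in $[0,1]$ equals $1$ iff every such quantity equals $1$ (the paper phrases this as: all trajectories carry return $-1$, hence $\Vg(\bx)=-1$). For the preservation claim, however, the paper argues \emph{locally and forward in time}: it partitions time into intervals of small length $\Delta$, writes the one-step Bellman identity $-1=\Vg(\bx)=\E\big[\int_0^{\Delta}R_{\Gamma}\,\dd t+\Vg(\bX')\big]$, classifies the successor state $\bX'$ into four cases (terminal admitting $B$, terminal not admitting $B$, non-terminal with value $-1$, non-terminal with value in $(-1,0]$), and deduces from the identity that the two ``bad'' cases have probability zero; it also notes that $\max_{\bu}Q^*_{\Gamma}(\bx,\bu)=-1$ together with $Q^*_{\Gamma}\ge -1$ forces \emph{every} action to be optimal, so the conclusion is policy-independent. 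It then iterates over the finite horizon. Your argument is instead \emph{global and by contradiction}: you splice an $\epsilon$-optimal continuation onto any policy that reaches the sub-unity set $S=\{\Gamma_B<1\}$ with positive probability, and derive $P_{\tilde{\bpi}}(B\,|\,\bx)<1$, contradicting $\Gamma_B(\bx)=1$. Both are sound modulo the technical caveats each flags: the paper's route needs the existence of a suitably small $\Delta$ (which it explicitly assumes) but avoids measurable selection entirely; your route is the standard dynamic-programming-principle argument and handles the ``until $B$ occurs'' clause in one stroke via the stopping time $\tau$, but shifts the burden to the measurable selection of $\epsilon$-optimal policies and the admissibility of the spliced control, exactly as you note. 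One small point worth making explicit in your write-up: the step $\E[\Gamma_B(\bX(\tau))\mid\tau<\infty]<1$ follows because $1-\Gamma_B(\bX(\tau))>0$ almost surely on $\{\tau<\infty\}$, so its conditional expectation is strictly positive; as written the inference is correct but deserves a sentence.
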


\begin{proposition}[Null Proposition] \label{prop:null}
    If reachablity of an event $B$ is zero at some state $\bx$, then w.p.1 it will remain at zero. Moreover, this occurs if and only if $B$ will almost surely never happen, regardless of future actions and stochasticity. 
\end{proposition}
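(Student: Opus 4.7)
The plan is to reduce both halves of the Null Proposition to the value-function characterization supplied by the Value Lemma, and then to invoke the dynamic-programming inequality for the induced MDP $M_\Lambda$. Let $\tau_B$ denote the hitting time of event $B$. Since the reward in $M_\Lambda$ is zero until $B$ occurs and integrates to exactly $1$ on trajectories on which it does occur, the Value Lemma yields the probabilistic reading
\begin{align*}
    \Lambda_B(\bx) \;=\; \Vl(\bx) \;=\; \sup_{\bpi}\,\Pr_{\bpi}\!\left(\tau_B < \infty \,\big|\, \bX(0)=\bx\right) \;\geq\; 0.
\end{align*}

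Given this identity, the ``moreover'' equivalence is immediate: $\Lambda_B(\bx) = 0$ collapses the nonnegative supremum, forcing $\Pr_{\bpi}(\tau_B < \infty \mid \bX(0)=\bx) = 0$ for every admissible policy $\bpi$, which is precisely the assertion that $B$ almost surely never happens regardless of future actions and of the noise realization. The converse direction is the same statement read the other way.

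For the first (invariance) half, I would apply the dynamic-programming inequality. For every admissible policy $\bpi$ and every $t > 0$,
\begin{align*}
    \Vl(\bx) \;\geq\; \E_{\bpi}\!\left[\,\mathbf{1}\{\tau_B \leq t\} \;+\; \Vl(\bX(t))\,\mathbf{1}\{\tau_B > t\} \,\big|\, \bX(0)=\bx\right].
\end{align*}
When $\Vl(\bx) = 0$, both nonnegative summands on the right must vanish, so $\Pr_{\bpi}(\tau_B \leq t) = 0$ and $\Vl(\bX(t)) = 0$ $\bpi$-almost surely. Running this across every rational $t > 0$ and promoting to all $t \geq 0$ via continuity of the diffusion's sample paths (together with lower semicontinuity of $\Lambda_B$, discussed below) delivers the w.p.-1 invariance of the zero set. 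The discrete-time case reproduces the argument verbatim with sums replacing integrals and needs no path-continuity step.

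\textbf{Main obstacle.} The most delicate step will be strengthening ``$\Lambda_B(\bX(t)) = 0$ a.s.\ for each fixed $t$'' into the simultaneous statement ``$\Lambda_B(\bX(t)) = 0$ for all $t$ on a single event of probability one''. The single-$t$ claim is a clean Bellman/strong-Markov consequence, but the uniform-in-$t$ version requires some regularity of $\bx \mapsto \Lambda_B(\bx)$. I expect lower semicontinuity of $\Lambda_B$---inherited from the continuity assumptions on $\bmu$ and $\bsig$ via a Portmanteau-type argument on the value functional---to combine with path-continuity of $\bX(\cdot)$ and a countable dense set of times to close this gap.
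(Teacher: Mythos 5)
Your proposal is correct and follows essentially the same route as the paper's proof: both reduce the statement to the Value Lemma's identification $\Lambda_B = \Vl$ and then exploit nonnegativity of the reward in $M_\Lambda$ through a Bellman backup, forcing both the probability of hitting $B$ and the continuation value to vanish (the paper does this via an explicit three-case partition of the post-$\Delta$ states rather than your one-line dynamic-programming inequality). Your explicit attention to upgrading the fixed-$t$ almost-sure statement to a simultaneous-in-$t$ one is a point the paper glosses over by simply iterating its $\Delta$-step argument under the finite-horizon assumption.
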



\begin{proposition} \label{prop:exp-change}
Let actions be selected according to a fixed policy $\bu=\bpi(\bX)$ over a fixed time interval. The resultant expected changes in grit and reachability of a future event $B$ are bounded as follows: $(1)$ $\min_{\bpi} \E\left[ \Delta \Gamma_{B} \right] \le 0$, and $(2)$ $\E\left[ \Delta \Lambda_{B} \right] \le 0 ~~$ for all $\bpi$. Further, the equality in both statements holds if transitions are deterministic. 
\end{proposition}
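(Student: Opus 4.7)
The plan is to leverage the Value Lemma to recast $\Gamma_B$ and $\Lambda_B$ as optimal value functions of the constructed MDPs $M_\Gamma$ and $M_\Lambda$, and then invoke the Bellman optimality equation over the fixed time interval on which $\bpi$ is applied to read off the desired monotonicity.

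By the Value Lemma, $\Gamma_B(\bX)=-\Vg(\bX)$ and $\Lambda_B(\bX)=\Vl(\bX)$. Since event $B$ is terminal (its conclusion at $t=T'$ is absorbing by the Value Lemma's hypothesis), the reward under $R_\Gamma$ or $R_\Lambda$ is collected only at $B$'s realization. Extending the values naturally so that $\Gamma_B=\Lambda_B=1$ on any post-$B$ state (and $=0$ on absorbing states that rule out $B$), the Bellman optimality equation over any interval $[t_1,t_2]$ on which $B$ has not yet concluded becomes
\begin{align*}
    \Lambda_B(\bX(t_1)) &= \max_{\bpi}\, \E\bigl[\Lambda_B(\bX(t_2)) \,\big|\, \bX(t_1),\bpi\bigr], \\
    \Gamma_B(\bX(t_1)) &= \min_{\bpi}\, \E\bigl[\Gamma_B(\bX(t_2)) \,\big|\, \bX(t_1),\bpi\bigr],
\end{align*}
the sign flip in the second identity arising from $\Gamma_B=-\Vg$ together with $\max(-f)=-\min f$. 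The continuous-time version is obtained via stochastic integration, exactly as in the Decomposition Lemma.

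The two claims then follow in one line each. For (2), the maximum dominates any particular policy, so $\Lambda_B(\bX(t_1))\ge\E[\Lambda_B(\bX(t_2))\mid\bpi]$ for every $\bpi$, which rearranges to $\E[\Delta\Lambda_B\mid\bpi]\le 0$. For (1), the analogous bound $\Gamma_B(\bX(t_1))\le\E[\Gamma_B(\bX(t_2))\mid\bpi]$ holds for every $\bpi$; taking the minimum over $\bpi$ and noting that it is attained by the grit-minimizing policy yields $\min_\bpi\E[\Delta\Gamma_B]=0\le 0$. Taking an outer expectation over the distribution of the initial state preserves both inequalities.

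For the deterministic case, $\bX(t_2)$ is a deterministic function of $\bX(t_1)$ and $\bpi$, so the expectations drop out of the Bellman equations, which become pointwise identities $\Lambda_B(\bX(t_1))=\max_\bpi\Lambda_B(\bX(t_2))$ and $\Gamma_B(\bX(t_1))=\min_\bpi\Gamma_B(\bX(t_2))$; the optimizing $\bpi$ then realizes $\Delta=0$ sample-path-wise, not merely in expectation, so (1) and (2) are tight. The main technical delicacy I expect is choosing the terminal extension of $\Gamma_B$ and $\Lambda_B$ so that the pre-event Bellman equation above takes the clean form used here (i.e., so that the reward collected during $[t_1,t_2]$ collapses into the terminal-value boundary convention); once that bookkeeping is settled, the rest is a short inequality chase.
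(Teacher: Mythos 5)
Your argument is correct, and it reaches the same conclusion by a genuinely shorter route than the paper. The paper's proof works in differential form: it invokes the Decomposition Lemma to write $\E[\Delta\Gamma_B]$ as the conditional expectation of a time integral of the generator applied to $\Gamma_B$, converts to $V^*_{\Gamma}$ via the Value Lemma (flipping $\min$ to $\max$), pushes the maximization inside the integral to get an inequality, and then kills the integrand with the Hamilton--Jacobi--Bellman equation, which equals zero on the pre-$B$ region because the running reward vanishes there. You instead use the integrated form of the same optimality principle: the dynamic programming principle $V^*(\bX(t_1))=\max_{\bpi}\E[V^*(\bX(t_2))]$ with zero running reward before $B$, after absorbing the event reward into a terminal boundary value. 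The two are the infinitesimal and integrated faces of the same fact, but your version buys something concrete: it does not require $\Gamma_B$ and $\Lambda_B$ to be twice differentiable in state (an assumption the Decomposition Lemma needs), and it makes transparent that $\min_{\bpi}\E[\Delta\Gamma_B]$ is actually \emph{equal} to zero whenever the optimum is attained, which is sharper than the stated bound. The paper's route has the advantage of exhibiting the result as a corollary of the Decomposition Lemma, keeping it within the componentwise machinery used elsewhere. Two small cautions: your boundary convention (setting the extended value to $1$ on post-$B$ states) is doing real work in letting the reward collected during $[t_1,t_2]$ collapse into the terminal value, so it deserves a line of justification rather than the word ``naturally''; and for the deterministic-equality claim in statement (2), which is quantified over all $\bpi$, your argument (like the paper's) only establishes equality for the optimizing policy, so you should state that reading explicitly.
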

The unity proposition states that \textit{one} is the (only) sticky value for grit: once it is reached, grit will remain at one until $B$ is forcefully reached, irrespective of any \textit{intrinsic} or \textit{extrinsic} future event. We will use this important property for proving the sufficiency of a cause. 
The null proposition, enables to reason about rejection of a future event. We will use this proposition to establish necessity of a cause. 
The third proposition provides anticipation for the \textit{expected change} of grit and reachability (i.e., on average). 
Of note, in practice, a \textit{learned} value function is often used in place of $V^*$, which may violate such properties to various degrees depending on the level of approximation errors. 



\section{Formal Establishment of Causation} \label{sec:formal-establishment}

Let $\varphi_A(j) = \E \left\{ g_j | A\right\} + \E \left\{ \dot{g}_j | A \right\} + \sum_{\substack{i=1 \\ i\neq j}}^{n} \E \left\{ \ddot{g}_{j,i} | A \right\}$ be the impact of component $j$ on event $B$'s grit during event $A$ (likewise for actions). Using decomposition lemma, we can directly state the definition of causation in a mathematical form, which we call \textit{proposition of causation}:
\begin{proposition}[Causation] \label{prop:causation}
Let $A$ occurs over the interval $[t_1, t_2]$ and $\mathcal{D}_A$ be the set of $A$'s ruling variables. $A$ is a cause of $B$ if and only if
\begin{enumerate}
    \item $A$ happens before $B$
    \item $\mathbb{E}\{\Delta_A(\Gamma_B)\} > 0$ ~~and~~ $\mathbb{E}\{\Gamma_B\big(\bX(t)\big)\} > \mathbb{E}\{\Gamma_B\big(\bX(t_1)\big)\}$ for all $t>t_2$
    \item  $\sum_{j\in \mathcal{D}_A}\varphi_A(j) > - \sum_{j \not\in \mathcal{D}_A} \min\big( \varphi_A(j), 0 \big)$
\end{enumerate}
\end{proposition}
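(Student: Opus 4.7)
The plan is to prove this as a direct mathematical translation of Definition~\ref{def:causation}, establishing the biconditional by matching the three numbered conditions of Proposition~\ref{prop:causation} with \textit{C1}--\textit{C3} of the definition and appealing to the Decomposition Lemma to unpack the per-component contributions.

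Conditions 1 and the first half of 2 are essentially verbal-to-symbolic restatements. \textit{C1} becomes the clause ``$A$ happens before $B$'', and the first part of \textit{C2} -- that expected grit strictly increases from the beginning to the end of $A$ -- is precisely $\mathbb{E}\{\Gamma_B(\bX(t_2))\} - \mathbb{E}\{\Gamma_B(\bX(t_1))\} = \mathbb{E}\{\Delta_A\Gamma_B\} > 0$. The second part of \textit{C2}, that grit never falls back to its pre-$A$ value until $B$ occurs, translates literally into $\mathbb{E}\{\Gamma_B(\bX(t))\} > \mathbb{E}\{\Gamma_B(\bX(t_1))\}$ for all $t>t_2$ before $B$'s occurrence. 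Neither step requires machinery beyond unpacking the definitions.

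The substantive step is the equivalence of condition 3 with \textit{C3}. Here I would invoke Lemma~2 to write
\begin{equation*}
\mathbb{E}\{\Delta_A\Gamma_B\} \;=\; \sum_{j=1}^{n} \varphi_A(j) \;=\; \sum_{j\in\mathcal{D}_A}\varphi_A(j) + \sum_{j\notin\mathcal{D}_A}\varphi_A(j),
\end{equation*}
identifying $\varphi_A(j)$ as the individual contribution of component $j$ to the change in expected grit. \textit{C3} demands that (a) the total contribution of ruling variables be strictly positive, and (b) this sum strictly exceed, in magnitude, the aggregate of negative contributions among non-ruling variables. Writing the latter as $\sum_{j\notin\mathcal{D}_A,\,\varphi_A(j)<0}\varphi_A(j) = \sum_{j\notin\mathcal{D}_A}\min(\varphi_A(j),0)$ -- a sum of non-positive terms -- the magnitude condition becomes $\sum_{j\in\mathcal{D}_A}\varphi_A(j) > -\sum_{j\notin\mathcal{D}_A}\min(\varphi_A(j),0)$, which is exactly condition~3. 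Since the right-hand side is non-negative, this single inequality already forces strict positivity of the ruling-variable sum (trivially when no non-ruling variable has negative impact, and strictly otherwise), so no separate positivity condition need be stated.

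The main obstacle is justifying the clean per-component attribution that defines $\varphi_A(j)$: this relies on the additive structure of the It\^o expansion underlying the Decomposition Lemma, which separates drift ($g_j$), pure-variance ($\dot{g}_j$), and cross-variance ($\ddot{g}_{j,i}$) terms indexed by component, combined with linearity of expectation. Once these are recognized as genuinely per-component and additive, splitting the sum along $\mathcal{D}_A$ versus its complement is immediate. If the ruling set contains action components, one augments the decomposition with the deterministic terms $h_k$ introduced just after Lemma~2 and carries out the same bookkeeping.
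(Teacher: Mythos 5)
Your proposal is correct and follows exactly the route the paper takes: its proof of this proposition is a one-line appeal to ``direct translation of Definition~1 into our formal concepts'' plus the Decomposition Lemma to obtain the individual contributions $\varphi_A(j)$. Your expanded bookkeeping---including the observation that the single inequality in condition~3 subsumes the positivity requirement of \textit{C3} because the right-hand side is non-negative---is a faithful and slightly more explicit version of the same argument.
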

Proposition \ref{prop:causation} judges $A$ as a whole. If $A$ contains more than one ruling variable, i.e., $|\mathcal{D}_A|>1$, a comparison of their individual contributions will help discover spurious or redundant variables inside $A$. This can prove useful in the context of causal discovery.

\paragraph{Key Properties of Causation}
Our proposition of causation induces various desired properties, we discuss a number of them herein. We, however, remark that no such statements as presented in this section are required and they are provided to grant certain plausibility to the theory. Nevertheless, the actual merit of our theory lends itself to its practical implications.



Without loss of generality, let event $B$ have only one ruling variable, $X_b$, and if $B$ occurs, it will be over the time interval $[T, T']$; hence, $T'$ is either terminal or no reward afterwards. Using value lemma, decomposition lemma, the definition of value functions, and the fact that the reward function of $B$ is only a function of $X_b$, i.e., $r_B(\bX)=r_B(X_b)$, it follows that
\begin{align}
    \nonumber\pd{\Gamma_B(\bX(t))}{X_j(t)} &= -\pd{V^{*}_{\Gamma}\big(\bX(t)\big)}{X_j(t)} = -\pd{}{X_j(t)} \mathbb{E} \int_{t^+}^{\infty}r_B(\bX(t')) \dd t' = -\pd{}{X_j(t)} \mathbb{E} \int_{T}^{T'}r_B(\bX(t')) \dd t' \\
    &= - \mathbb{E} \int_{T}^{T'} \pd{r_B(X_b(t'))}{X_j(t)} \dd t' = - \mathbb{E} \int_{T}^{T'}\frac{\dd r_B(X_b(t'))}{\dd X_b(t')}\pd{X_b(t')}{X_j(t)} \dd t' \label{eq:der-grit}
\end{align}
Similar equations can be derived for the second derivatives. 
These derivatives of $\Gamma_B$ are still random variables due to $\bX(t)$, as the expectation operators (from the definition of value functions) only affect stochasticity after $t$. If $B$ happens, the term $\dd r_B(X_b(t'))/\dd X_b(t')$ is nonzero (by construction) over $[T,T']$. Consequently, the driver terms are the derivatives (sensitivity) of $X_b$ at a future time $t'$ to the $j$-th state component at an earlier time $t$. If the sensitivity is zero, then the contribution of $j$ in change of grit will render null. 

To shed more light on this, let us expand the first derivative. Remark that $\bX(\cdot)$ is a diffusion; hence, there exists a sequence of $m\ge 0$ stopping times from $t$ to $t'$, such that $t\le \tau_1 < \tau_2 < \dots < \tau_m \le t'$. The strong Markov property of $\bX$ asserts that state components at each stopping time are conditionally independent of their values at any time prior to the preceding stopping time. We therefore write
\begin{align}
    \pd{X_b(t')}{X_j(t)} = \sum_{i_1\in\mathcal{D}_1} \dots \sum_{i_{m-1}\in\mathcal{D}_{m-1}}\sum_{i_m\in\mathcal{D}_m} \pd{X_b(t')}{X_{i_m}(\tau_m)} \pd{X_{i_{m}}(\tau_m)}{X_{i_{m-1}}(\tau_{m-1})} \dots \pd{X_{i_1}(\tau_1)}{X_j(t)} \label{eq:shapley}
\end{align}
where $\mathcal{D}_k$ is the set of all state variables, which appear in the (stochastic) differential equation of $X_{i_{k+1}}$, with $\mathcal{D}_{m}$ corresponds to those of $X_b$.
Equation \eqref{eq:shapley} shows how a change in $X_j$ at $t$ propagates through other components across time until reaching $X_b$ at $t'$, thus causing $X_b$ to change in a certain way during event $B$. This may also be seen as a formal materialization of what philosophers refer to as ``\textit{chain of events} from $A$ to $B$'' \citep{Lewis1973-ip, Paul1998-time}. 
Plugging \eqref{eq:shapley} into \eqref{eq:der-grit} and then into decomposition lemma, we see how this chain of events eventually changes the expected grit of $B$. Using these as well as previous results, we can prove various core properties:
\begin{enumerate}[i.]
    \item \textbf{Efficiency:} The collective contribution of all components during any time interval is equal to $\mathbb{E}~\Delta \Gamma_B$ over that interval.
    \item \textbf{Symmetry:} If two variables are symmetrical w.r.t. $X_b$ (i.e., having exactly the same impact on the dynamics of other variables, which ultimately reach $X_b$), then switching them does not impact $\partial_j\Gamma_B$. Furthermore, their contributions in $\Delta\Gamma_B$ will be exactly the same provided that their respective $\mu$ and $\sigma$ are the same during the given time interval. 
    \item \textbf{Null event:} Contribution of $X_j$ in $\Delta\Gamma_B$ is zero \textit{if and only if} at some stopping time through the propagation chain of \eqref{eq:shapley}, $\mathcal{D}_k$ is empty (meaning that there is no link between $X_j$ at $t$ and $X_b$ at $t'$). Such an event is called \textit{null event} w.r.t. $B$.
    \item \textbf{Linearity:} Let $A_i$, $A_j$, and $A_{i,j}$ be three events with the ruling variables $X_i$, $X_j$ and $\{X_i, X_j\}$, respectively. Then, the contribution of $A_{i,j}$ in $\Delta\Gamma_B$ is sum of the contributions of $A_i$ and $A_j$. 
\end{enumerate}


\paragraph{Correlations vs. Causation.} Wrongly identifying correlations as causal links is a core problem in formal reasoning. We show that our theory nullifies such links. Consider three consecutive and non-overlapping events $A$, $A'$, and $B$, which occur in this exact order and possess distinct ruling variables. Let $A$ be the cause of both $A'$ and $B$, and consider two cases: Case 1: $A'$ also causes $B$, and Case 2: $A'$ has nothing to do with $B$; however, they are still correlated due to having the same cause, i.e., $A$. Using \eqref{eq:shapley}, we observe that in Case 1, if both $A$ and $A'$ are causes of $B$, then all $\mathcal{D}_k$'s must be non-empty (otherwise they cannot be a cause due to the null-event property). As a result, in this case, change of grit will become non-zero, meaning that proposition of causation correctly asserts both $A$ and $A'$ as causes of $B$. In direct contrast, in Case 2, by assumption $A'$ is a null event for $B$ and at some stopping time after the conclusion of $A'$, the propagation of ruling variables of $A'$ towards $X_b$ is terminated (i.e., propagation of $A$ toward $A'$ and toward $B$ happens through different collections of $\mathcal{D}_k$'s). Thus, \eqref{eq:shapley} implies $\partial_{j}\Gamma_B = 0$ for $j \in \mathcal{D}_{A'}$; hence, proposition of causation will correctly reject $A'$ as a cause. This same logic can be used to address the problem of late-preemption that counterfactual theories have difficulty handling \citep{sep-causation-metaphysics}.

\paragraph{Sufficiency and Necessity of a Cause.} There are two further results of practical importance, namely, sufficiency and necessity of a cause.
A sufficient cause is one that forcefully makes the effect occur in finite time. According to unity proposition, the necessary and sufficient condition that $B$ forcefully happens from state $\bx$ is $\E ~\Gamma_{B}(\bx)=1$. Using this result, the following proposition is immediate:
\begin{proposition}[Sufficient Causation]
    Let $\bX^+$ be the state at $A$'s conclusion. Then, $A$ is a sufficient cause for $B$ if and only if $A$ is a cause for $B$ (proposition of causation holds) and $\E ~\Gamma_{B}(\bX^+)=1$. 
\end{proposition}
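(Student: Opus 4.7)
The plan is to prove both directions of the biconditional using two ingredients already at hand: the fact that grit is defined as an infimum of probabilities and hence $\Gamma_B(\cdot)\in[0,1]$ pointwise, and the Unity Proposition, which characterizes $\Gamma_B(\bx)=1$ as equivalent to $B$ occurring with probability one starting from $\bx$, regardless of any future actions or stochasticity. I take the operational content of ``sufficient cause'' to be what the paper's preceding paragraph states: $A$ satisfies the proposition of causation and, starting from the state $\bX^+$ at $A$'s conclusion, $B$ forcefully occurs in finite time.

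For the forward direction, suppose $A$ is a sufficient cause of $B$. Then $A$ is a cause by definition, so the proposition of causation holds, and $B$ occurs almost surely starting from $\bX^+$ irrespective of future choices. Applying the Unity Proposition pathwise to each realization of $\bX^+$ gives $\Gamma_B(\bX^+)=1$ almost surely, and taking expectation yields $\E\,\Gamma_B(\bX^+)=1$.

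For the reverse direction, assume the proposition of causation holds for $A$ and $\E\,\Gamma_B(\bX^+)=1$. Since $\Gamma_B\le 1$ everywhere, we have $\E[1-\Gamma_B(\bX^+)]=0$ with a nonnegative integrand, which forces $\Gamma_B(\bX^+)=1$ almost surely. Invoking the Unity Proposition in the opposite direction, $B$ occurs with probability one from $\bX^+$ regardless of future actions and stochasticity. Combined with $A$ already being a cause, this is precisely the definition of a sufficient cause.

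The only genuine subtlety is the measure-theoretic step in the reverse direction, where $\bX^+$ is itself a random variable (the state at $A$'s conclusion, shaped by the diffusion's intrinsic stochasticity and any randomness in actions during $A$); the move from $\E\,\Gamma_B(\bX^+)=1$ to $\Gamma_B(\bX^+)=1$ almost surely rests on the pointwise bound $\Gamma_B\le 1$, which is immediate from the definition of grit as an infimum of probabilities over admissible policies. Aside from this standard integration argument, the proof collapses to a clean application of the Unity Proposition in both directions, so I do not expect any deeper obstacle.
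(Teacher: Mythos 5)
Your proof is correct and follows essentially the same route as the paper, which simply notes that the proposition is immediate from the Unity Proposition together with the definition of a sufficient cause as a cause that forcefully brings about $B$ from $\bX^+$. The only thing you add beyond the paper's (very terse) argument is the explicit measure-theoretic step deducing $\Gamma_B(\bX^+)=1$ almost surely from $\E\,\Gamma_B(\bX^+)=1$ via the pointwise bound $\Gamma_B\le 1$, which is a welcome clarification but not a different approach.
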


A necessary cause is an event without which the effect will never happen from the current state. Occurrence of a necessary cause does not guarantee the effect's happening, but it is required for the effect to happen. 
More formally, if a state $\bx$ does not admit the occurrence of $A$, then $A$ is a necessary cause for $B$ from the state $\bx$ if every trajectory from $\bx$ to $B$ passes through $A$. That is, if $A$ is not reachable from $\bx$, then so isn't $B$. Using null proposition, 
the following is therefore immediate:
\begin{proposition}[Necessary Causation]
    Let $A$ be a unique event (i.e., ruling variables of $A$ admit certain values ``only if'' $A$ occurs) and let $\bX$ not admit conclusion of $A$. $A$ is a necessary cause at $X$ for the event $B$ if and only if $A$ is a cause for $B$, and $\E ~\Lambda_{A}(\bX) =0 \Longrightarrow \E ~\Lambda_{B}(\bX) =0$.
\end{proposition}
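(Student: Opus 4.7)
The plan is to reduce both directions of the biconditional to applications of Proposition \ref{prop:null} (the null proposition), which identifies $\E\Lambda_{E}(\bX)=0$ with the event $E$ almost surely never occurring from $\bX$ regardless of the future policy. This is precisely the bridge between the reachability quantities appearing in the statement and the trajectory-level content of the definition of necessary cause that precedes the proposition.

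For the forward direction, I would assume $A$ is a necessary cause at $\bX$ for $B$. The paragraph preceding the proposition supplies two pieces of information: $A$ is already a cause, and every trajectory from $\bX$ to $B$ first passes through $A$. Now suppose $\E\Lambda_{A}(\bX)=0$. By the null proposition, $A$ almost surely never happens from $\bX$ under any policy; since any trajectory from $\bX$ to $B$ must visit $A$, no trajectory from $\bX$ can reach $B$, and a second invocation of the null proposition gives $\E\Lambda_{B}(\bX)=0$. For the backward direction, I assume $A$ is a cause and that $\E\Lambda_{A}(\bX)=0 \Rightarrow \E\Lambda_{B}(\bX)=0$, and translate both sides through the null proposition: the implication becomes ``whenever $A$ almost surely never occurs from $\bX$, neither does $B$'', which is precisely the operational reading of ``without $A$, $B$ cannot happen from $\bX$''. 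Combined with the standing assumption that $A$ is a cause, this is exactly necessary causation at $\bX$.

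The delicate step, and the reason the uniqueness assumption on $A$ is present, is identifying $\Lambda_{A}(\bX)=0$ with ``no trajectory from $\bX$ ever visits $A$''. Uniqueness (the ruling variables of $A$ take their indicating values only if $A$ actually occurs) ensures that the $A$-signature on state space is unambiguous, so non-visitation of that region coincides with non-occurrence of $A$; without it, distinct trajectories could share the same intermediate ruling-variable values while disagreeing on whether $A$ took place, and the null-proposition translation between reachability and trajectory-level occurrence would lose its meaning. I expect this identification to be the main obstacle to state carefully; once it is in hand, the two directions are little more than unpacking the definition of necessary cause and invoking Proposition \ref{prop:null} twice.
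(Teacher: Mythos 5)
Your proposal is correct and follows essentially the same route as the paper: the paper's own proof is a one-line remark that Propositions 5 and 6 merely formalize the preceding discussion, with the necessary-causation case resting exactly on translating ``every trajectory from $\bX$ to $B$ passes through $A$'' into the reachability implication via the null proposition. Your more explicit double invocation of Proposition \ref{prop:null}, and your observation that the uniqueness assumption is what makes non-visitation of the $A$-region equivalent to non-occurrence of $A$, is a faithful (and somewhat more careful) elaboration of the argument the paper intends.
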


\paragraph{Computational Machinery.} We can approximate the integrals with summations of $M$ points using the trapezoidal rule ($M$ is a hyper-parameter). Let us use the first-order approximate of $\bmu$, which only depends on applying $\bu$ at the $m$-th point, corresponding to the time $m\cdot(t_2 - t_1)/M$. To simplify the notation, define $\bX(m) \doteq \bX(m\cdot(t_2 - t_1)/M)$. 
Note that in discrete-time problems, we still need to interpolate these points between the actual time ticks of the environment. We use forward approximation of $\bmu$ at $m$ and backward approximation at $m+1$, which alleviates the need for triple-point data of action $\bu$. This yields $\bmu_j(\bX(m), \bu) \simeq \bmu_j(\bX(m+1), \bu) \simeq \big( \bX_{j}(m+1) - \bX_{j}(m) \big) / \Delta t$. 
This formula approximates $\bmu$ by the slope of the (hyper-)line segment between $\bX(k)$ and $\bX(k+1)$. Using one-step trapezoidal rule and $\Delta t = (t_2 - t_1)/M$, it therefore follows (note that $\Delta t$ cancels out): $g_{j} \simeq \frac{1}{2} \sum_{m=1}^{M}\Big(\bX_{j}(m+1) - \bX_{j}(m)\Big) \cdot \Big( \partial_j\Gamma_{B}\big(\bX(m)\big) + \partial_j\Gamma_{B}\big(\bX(m+1)\big) \Big)$. 
We call this equation \textit{g-formula}. Similar formulas can be derived for $h$, $\dot{g}$, and $\ddot{g}$.

\section{Experiments}\label{sec:experiments}


We present two illustrative examples that no existing method can tackle. Modeling dynamical systems as SCMs is computationally and memory intensive to a prohibitive degree, especially in systems with numerous variables \citep{koller2009probabilistic}. Additionally, it typically requires causal discovery methods and domain expertise to establish causal graphs and system equations. In contrast, our method operates solely on raw observational data without accessing system equations or causal graphs. Furthermore, defining $\neg A$ events in interventionist frameworks is largely ambiguous, particularly in continuous spaces, and predicting intervention outcomes heavily relies on restrictive assumptions about interventions or system models \citep{peters2022causal, hansen2014causal}. Consequently, existing methods are irrelevant for baseline comparisons.

\paragraph{Atari Game of Pong.} To understand causal reasoning in a real setting, we applied our theory to the Atari 2600 \citep{bellemare13arcade} game of Pong. Event $B$ is losing a score, and the question is why $B$ happens if the player does not move its paddle. For our study, we use the DQN architecture \citep{Mnih2015-dqn}, but set $\gamma=1$ and $r=-1$ if losing a point (terminal state) and zero otherwise. The rest of hyper-parameters are similar to \cite{Mnih2015-dqn}. We next use the Pytorch's \textit{autograd} to compute the value function's gradient w.r.t. screen pixels, based on which we could compute g-formula with $M=10$ computational micro-steps to compute the integral. Further details can be found in Appendix D. Illustrated in Fig. \ref{fig:atari_full}, the method accurately pinpoints not only the last steps where the paddle should have moved (49), but also the pixels corresponding to the ball's movement. From 48, at each step, the set of actions that can catch the ball increasingly shrinks and the expected grit of losing a point increases. As all conditions in proposition of causation hold, these ball movements are the causes for $B$. Moreover, the change from 50 to 51 fulfills the proposition of sufficient causation. Playing the game step by step, one can easily confirm that 50 is the first frame, which is already too late and nothing can be done to catch the ball. Remark again that all these results are obtained through sheer learning  with no access to system equations or human-level knowledge. 

\begin{figure*}
\centering
\includegraphics[width=0.95\textwidth, trim={0.08in 0 0 0in},clip]{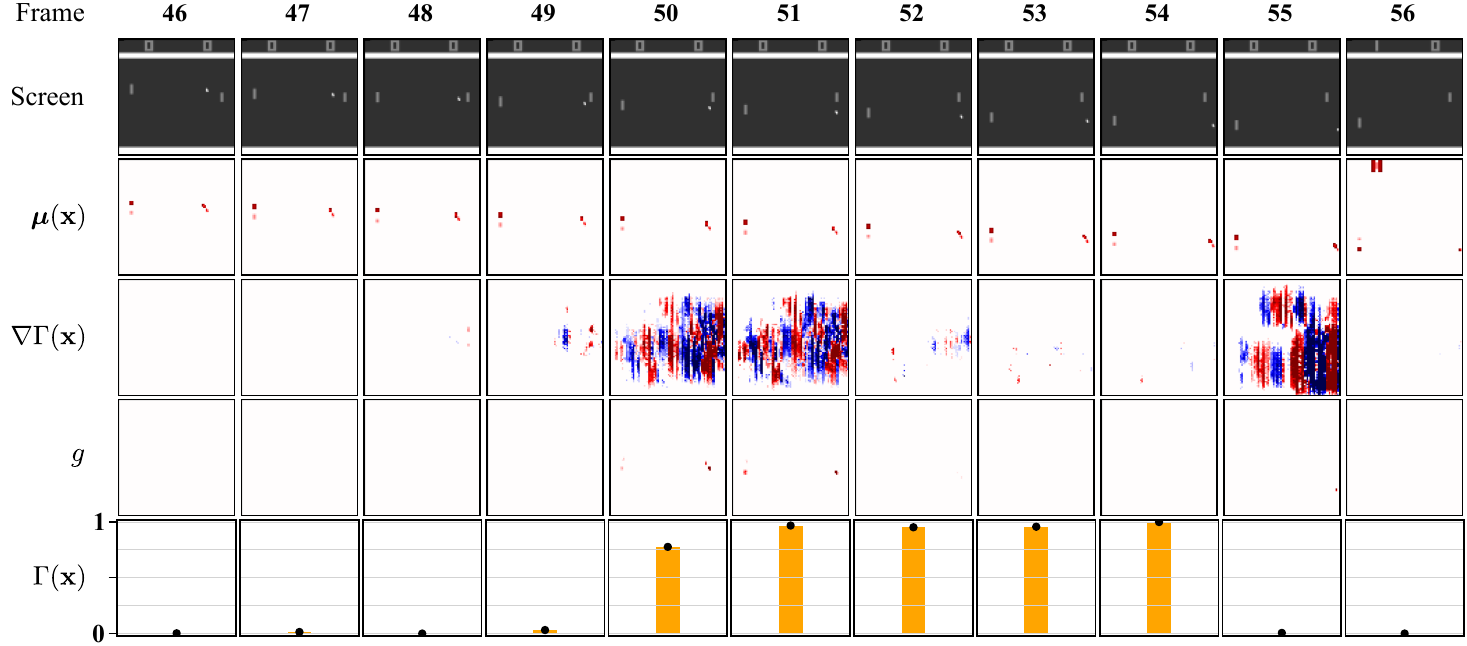}
\caption{\textbf{Atari game of Pong.} $\nabla\Gamma$ is shown from red ($\ge 1$) to blue ($\le -1$). Probing $g$ and $\Gamma(\bx)$ at frames 49 to 51 reveals the cause of losing score. Moreover, a \textit{sufficient cause} is realized at frame 51.}
\label{fig:atari_full}
\end{figure*}



\paragraph{Real-world Diabetes Simulator.} In this experiment, we analyze multivariate medical time-series data using an open-source implementation of the FDA-approved Type-1 Diabetes Mellitus Simulator (T1DMS) \citep{kovatchev2009silico}. The simulator models a patient’s blood glucose (BG) level and 12 other real-valued signals representing glucose 
and insulin values 
in different body compartments. We control two actions: 1) insulin intake, to regulate insulin level 2) meal intake, to manage the amount of carbohydrates. More details on the data and experiment design can be found in Appendix E.
This experiment helps us investigate the impact of insulin and carbohydrate intake on blood glucose levels.
Meal intake increases the amount of glucose in the bloodstream, which for T1D patients, is regulated with external insulin intake.   
However, intensive control of blood glucose with insulin injections can increase the risk of \textit{hypoglycemia} (BG level < 70 mg/dL), which is the effect event we aim to understand its causes (event $B$). We use Monte Carlo learning to estimate $V_{\Gamma}(\bX)$ and thus $\Gamma_{B}(\bX)$. Of note, the algorithm has only access to data.
In our setting, we observe that intake of insulin causes an instantaneous spike in dynamics of \textit{subcutaneous insulin 1} (SI1), while intake of carbohydrates causes an instantaneous spike in \textit{glucose in stomach 1} (GS1). Since the previous action is considered as part of the current state, the spike in subcutaneous insulin just acts as a proxy for action insulin, similarly for action meal and GS1. 
We study two scenarios under event $B$:     


\begin{figure*}[t!]
    \centering
    \includegraphics[width=0.94\textwidth, clip]{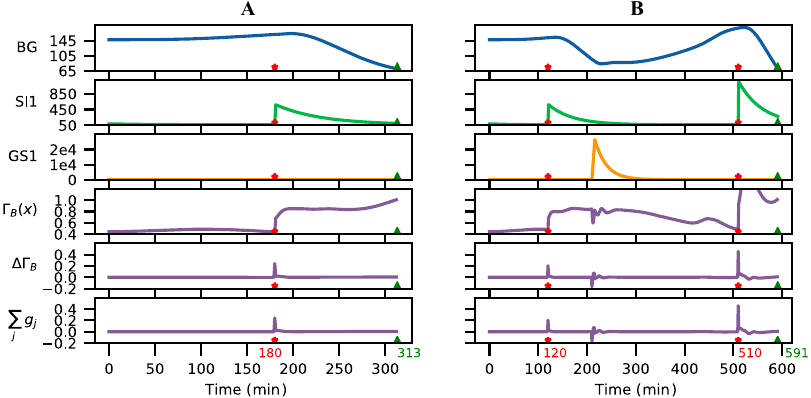}
    \caption{\textbf{Diabetes simulator.} Cause events are depicted with \textcolor{red}{\boldsymbol{$*$}} markers and the effect event $B$ (BG $< 70$) with \textcolor{olive}{$\blacktriangle$} marker. The simulation ends when event $B$ happens. The last two lines are change of grit computed from $V_{\Gamma}$ directly and from decomposition lemma (which are consistent).}
    \label{fig:overall_grit}
\end{figure*}
    
\textbf{(A) Single intake of insulin}: In the scenario described in Fig. \ref{fig:overall_grit}-\textbf{A}, we wish to answer \textit{why} event $B$ (BG level < 70 over $t \in [312, 313]$) did happen. Here we notice that $\Delta\Gamma_{B}(\bX) > 0$ over $t \in [180,181]$. Hence we can say that the change in state at $t \in [180,181]$ contains possible causes for hypoglycemia. Now, we need to determine the change in which state component $X_i$ could have led to $\Delta\Gamma_{B}(\bX) > 0$. To do this, we will decompose $\Delta\Gamma_{B}(\bX)$ as individual contributions from each state variable. Using decomposition lemma, in Fig. \ref{fig:decomposed_grit}-\textbf{A} we see the individual contributions of each variable towards the total grit as seen in $g_i$. We notice that at the time interval $t \in [180,181]$, the contribution to total grit only comes from variable SI1. Therefore, a change in variable SI1 at $t \in [180,181]$ (event $A$) is considered cause of event $B$.  

\textbf{(B) Multiple intakes of insulin and meal}: 
In Fig. \ref{fig:overall_grit}-\textbf{B}, following a similar drill, we notice that $\Delta\Gamma_{B}(\bX) > 0$ at $t \in [180,181]$ and at $t\in [510,511]$. However, although $\Delta\Gamma_{B}(\bX) > 0$ at $t \in [180,181]$, $\Gamma_B$ decreases to levels before $t$=180, prior to reaching event $B$. Therefore, change in variables at $t\in [180,181]$ cannot be a cause since it violates condition C2 of proposition of causation. This only leaves change in variables at $t \in [510,511]$ as a possible cause. Using decomposition lemma, Fig. \ref{fig:decomposed_grit}-\textbf{B} reveals that at time interval $t \in [510,511]$ the contribution to total grit comes only from variable SI1 (event $A_2$). Therefore event $A_2$ is the only cause of event $B$. 

\begin{figure*}[t!]
    \centering
    \includegraphics[width=0.97\textwidth, clip]{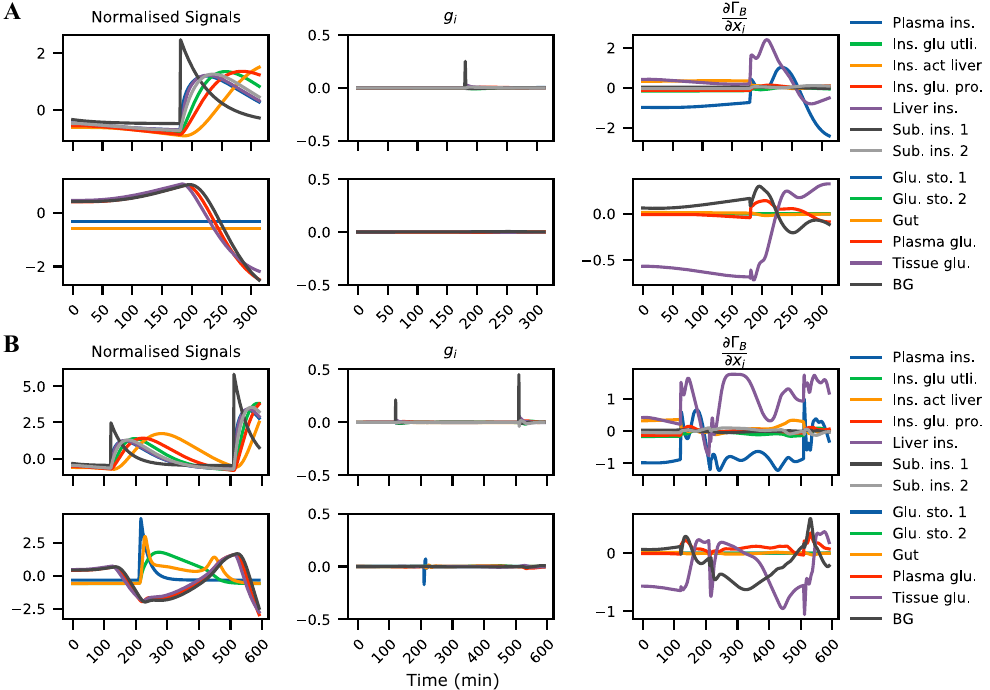}
    \caption{\textbf{Diabetes simulator.} Individual contributions of each signal towards the total grit.}
    \label{fig:decomposed_grit}
\end{figure*}



\section{Conclusions}

We have presented a general theory for causation in dynamical settings, based on a direct analysis of the underlying process. We confined our exposition mostly to the case of given an event as effect, how to reason about possible causes. Our formal results enable a full framework, which can answer such causal questions directly from raw data. Further, we showed that various desired properties are immediate from our postulation, including the core conditions of counterfactual views. The main limitation of this work is two-fold: higher moments than two of the stochasticity are dismissed and the full information state is assumed. Relaxation of these assumptions are left for future work.


\newpage

\section*{Data and Code Availability}
Our code and pretrained models to replicate the analysis (including figures) presented in this paper is publicly available at: \url{https://github.com/fatemi/dynamical-causality}. 

For the T1D experiment we have used an open-source implementation of the FDA-approved Type-1 Diabetes Mellitus Simulator \citep{kovatchev2009silico}. The code is publicaly available at: \url{https://github.com/jxx123/simglucose}

\section*{Acknowledgments}
We express our sincere gratitude to our colleagues whose invaluable advice enhanced the quality of this work. Special thanks are extended to the former RL team and other colleagues at MSR Montreal for their pivotal suggestions during the early stages of this project. We are particularly grateful for the insightful feedback offered by Marzyeh Ghassemi, Elliot Creager, and Rahul G. Krishnan. Additionally, we would like to acknowledge the anonymous reviewers for their encouraging remarks and constructive suggestions, which helped to improve this paper.

Sindhu Gowda is supported by grants provided through Vector Institute and the University of Toronto. Special thanks is extended to her university advisor, Dr. Marzyeh Ghassemi, for all her supports and encouragements. 

Resources used in performing this research were partially provided by Microsoft Research, the Province of Ontario, the Government of Canada through CIFAR, and companies sponsoring the Vector Institute: \url{www.vectorinstitute.ai/\#partners}.

\section*{Author Contributions}
This paper is the culmination of an extensive collaborative effort. MF initiated, conceptualized, and led the project. MF designed and developed theoretical concepts, formulated proofs, and implemented the Atari experiment. SG contributed significantly to discussions, integrating concepts and ideas from established causal literature, and providing diverse comparisons and discussions presented in both the main text and the Appendix. SG implemented the T1D experiment. Both authors jointly analyzed the results, co-authored the manuscript, and finalized the paper. The authors declare equal contributions to this work.

\clearpage







\bibliography{ref}
\bibliographystyle{unsrtnat}



\newpage

\setcounter{theorem}{0}
\setcounter{lemma}{0}
\setcounter{proposition}{0}

\appendix

\section{Related Work}

\paragraph{Static Settings.}
As noted above, philosophers have time and again proposed different theories trying to understand causation from raw data. Since the 80's, statisticians have tried to materialize this dream through mathematical reductions. The most popular framework for \textit{actual causation} was purposed by Halpern and Pearl \cite{halpern2001causes,halpern2005causes} following Pearl's influential book on causality that provided the first formal definition of causation \cite{pearl2000models}. Pearl claimed that using causal models allows one to make the intuitive understanding of causation formally precise, whereas existing logical notions lack the resources to do so.
Further, Pearl defined three basic probabilities of causation -- the probability of necessity, of sufficiency, and of necessity and sufficiency, and ways to calculate them from data \citep{pearl2022probabilities}. Moreover, researchers have used the causal structure and the properties of the data to narrow the bounds of the above probabilities of causation \cite{tian2000probabilities, dawid2017probability,mueller2021causes}. Needless to say, Pearl’s account has come under criticism and revision -- both from philosophers and researchers in AI  \cite{beckers2021causal,beckers2018principled,halpern2015modification,weslake2015partial,hitchcock2001intransitivity,hitchcock2007prevention}. 
However, all these works try to infer causal relationships from \textit{non-temporal} data by making certain assumptions about the underlying process of data generation (causal graphs), which restricts the understanding of causation to static settings.

\paragraph{Dynamic Settings.}
Works like \cite{granger1969investigating, white2010granger, peters2013causal, pfister2019invariant, eichler2012causal, huang2020causal} deal with understanding causal relations in time series data, but mostly consider discrete-time models. Moreover, they focus on finding causal dependencies between different variables in time series data while we try to find causation between events, defined by a change in variables during a homogeneous time interval. 
Further, works like \cite{peters2022causal, hansen2014causal, mooij2013ordinary, blom2018generalized, bongers2018causal, rubenstein2016deterministic} focus on continuous time systems that are governed by ordinary differential equations and propose a framework to model dynamical systems as structural causal models (SCMs). Again, they focus on understanding the effect of interventions and on causal structure learning or causal discovery under various system-level assumptions and do not deal with understanding the causation of events itself. It’s important to note that “causal discovery” or structural learning as used in current literature deals with inferring the underlying causal structure or dependencies between variables from raw data \cite{pearl1980causality, spirtes2000causation}. It does not concern itself with understanding the cause of an event in a specific context \cite{halpern2016actual}. 

Further, all the above-mentioned methods deal with understanding causation from a counterfactual-interventionist perspective \cite{woodward2005making, pearl1980causality}, while we follow the route of process theory of causation and emphasize system-level thinking to answer questions of causation \cite{fazekas2021dynamical,salmon1984scientific,dowe2000physical}. Works like \citet{fazekas2021dynamical} propose a philosophical framework for a dynamical systems approach to causation based on the process theory of causation \cite{salmon1984scientific,dowe2000physical} and emphasize conceptually the importance of system-level thinking. However, the paper stops there, while we provide a formal framework that materializes this idea and enables computational machinery.


\paragraph{Sensitivity Analysis.}
Another related area from a different domain is \textit{sensitivity analysis}. The primary goal of sensitivity analysis is to understand the sensitivity or responsiveness of a model's output to variations in its input factors \cite{ljung1987system, skogestad2005multivariable}. However, it should be highlighted that sensitivity does not imply causation and defining causation purely based on sensitivity results in wrong causal arguments. Additionally, no connection is made in sensitivity analysis with value functions and learning algorithms thereof. 

We believe a side-by-side discussion of dynamical systems and the theory of causation will allow us to develop novel approaches, transfer expertise across communities, and enable us to overcome the current limitations of each perspective individually. Our goal is to cast causation as a learning problem from dynamic temporal data such that given sufficient data, one can reliably answer the question of \textit{why}. Notably, our paradigm conveniently covers both cases of intrinsic causation (cause is a change in the environment itself) and extrinsic causation (cause is an action applied to the environment).

\section{Extended Formal Results}

Here, we present the proofs of formal claims from the paper with further discussions. The results are numbered as in the main paper. 

\vspace{0.1in}

\begin{lemma}[\textbf{Value Lemma}]

Define two MDPs $M_{\Gamma}$ and $M_{\Lambda}$ to be identical to $M$ with their corresponding reward kernels being $R_{\Gamma} = -\delta(\bx - \tilde{\bx})$ and $R_{\Lambda} = \delta(\bx - \tilde{\bx})$, where $\tilde{\bx}$ admits the occurrence of event $B$ and $\delta(\cdot)$ denotes the Dirac delta function. Further, set all such $\tilde{\bx}$ as terminal states. Let $\Vg(\bx)$ and $\Vl(\bx)$ denote the optimal value functions of $M_{\Gamma}$ and $M_{\Lambda}$, respectively, under $\gamma=1$. Then, the followings hold for all $\bx\in \mX$:

\begin{enumerate}
    \item $\Gamma_{B}(\bx) = -\Vg(\bx)$
    \item $\Lambda_{B}(\bx) = \Vl(\bx)$
\end{enumerate}
\end{lemma}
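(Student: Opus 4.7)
The plan is to reduce each optimal value function to an extremum (over policies) of the probability that event $B$ occurs, so that the right-hand sides coincide with the definitions of $\Gamma_B$ and $\Lambda_B$ given in Section 2. The key structural observation I would exploit is that, because the reward kernels are concentrated on the set of states admitting $B$ and such states are declared terminal, every sample trajectory either hits this absorbing set in finite time---picking up a single integrated reward of $\pm 1$---or never does and accumulates $0$. Thus for any fixed policy $\bpi$ and starting state $\bx$, the return along a trajectory is the indicator $\pm \mathbf{1}[B \text{ occurs}]$, which is bounded, so the undiscounted value function is well-defined and equals
\[
V^{\bpi}_\Lambda(\bx) = P_{\bpi}(B \mid \bX(t)=\bx), \qquad V^{\bpi}_\Gamma(\bx) = -\,P_{\bpi}(B \mid \bX(t)=\bx).
\]

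Next I would apply the optimal-value definitions. From $\Vl(\bx) = \max_{\bpi} V^{\bpi}_\Lambda(\bx)$, the identity above immediately yields $\Vl(\bx) = \max_{\bpi} P_{\bpi}(B\mid \bx) = \Lambda_B(\bx)$, which is exactly the definition of reachability. For grit, using $\max_{\bpi}\bigl(-f(\bpi)\bigr) = -\min_{\bpi} f(\bpi)$, I get $\Vg(\bx) = -\min_{\bpi} P_{\bpi}(B\mid \bx) = -\Gamma_B(\bx)$, so the minus sign in the claim $\Gamma_B(\bx) = -\Vg(\bx)$ is precisely the conversion between the $\max$ inside the Bellman optimum and the $\min$ inside the definition of grit. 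Both parts of the lemma then follow.

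The main obstacle is giving rigorous meaning to the Dirac-delta reward kernel in the continuous-time diffusion setting, where $\delta(\bx - \tilde{\bx})$ is a distribution rather than a function and naive integration against a sample path is not obviously well-defined. I would sidestep this by adopting the trajectory-integrated specification used in the main-text version of the lemma: the reward is zero outside the event window $[T,T']$ and integrates to $\pm 1$ across it. Under this reading, each trajectory terminates either at a $B$-admitting state (contributing $\pm 1$) or never (contributing $0$), and the rest of the argument goes through unchanged. A secondary technical point is to verify that the admissible policy class is rich enough that the extrema $\max_{\bpi}$ and $\min_{\bpi}$ over $P_{\bpi}(B\mid \bx)$ are attained, or at least that $\sup$/$\inf$ suffice; this should follow from the continuity assumptions on $\bmu$ and $\bsig$ imposed earlier, together with standard measurable-selection arguments for the strong Markov process $\bX$.
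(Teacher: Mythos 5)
Your proposal is correct and follows essentially the same route as the paper's proof: both arguments observe that every trajectory's return is $\pm\mathbf{1}[B\text{ occurs}]$, so the value under any policy is the signed probability of reaching $B$, and optimizing over policies turns the max of $-P_{\bpi}(B)$ into $-\min_{\bpi}P_{\bpi}(B)=-\Gamma_B$ and the max of $P_{\bpi}(B)$ into $\Lambda_B$. Your explicit per-policy identity $V^{\bpi}=\pm P_{\bpi}(B)$ followed by the $\max/\min$ conversion is a slightly tidier packaging of the paper's step of asserting that the optimal policy of $M_{\Gamma}$ ``maximally avoids'' $B$, and your remarks on the Dirac-delta reward and attainment of the extrema address technicalities the paper leaves implicit.
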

\begin{proof}
For part 1, let $\pi^*$ denote an optimal policy of $M_{\Gamma}$. We note that since the only source of reward is when $B$ is reached and it is negative, then $\pi^*$ maximally avoids reaching $B$ (i.e., $\pi^*$ optimally chooses to reach anywhere but $B$). Hence, following $\pi^*$ results in the minimum probability of reaching $B$ from any state. On the other hand, $\gamma=1$ induces that the return of any sample path is precisely $-1$ if $B$ is reached and zero otherwise. By definition, the optimal value of each state is the expectation of the return from all sample paths starting from that state and following $\pi^*$. Let $\mathcal{T}$ denote the set of all sample paths and partition it as $\mathcal{T} = \mathcal{T}_B \cup \mathcal{T}_N$, where $\mathcal{T}_B$ and $\mathcal{T}_N$ are disjoint sets corresponding to the paths which reach $B$ and those which do not (whose length can be of finite or infinite, the finite case occurs when there are terminal states in $M$ which may happen before ever reaching $B$ or when by assumption time horizon is finite). Let further $Z(\sigma)$ represent the return of a sample path $\sigma$ and $\mP(\sigma | \bx, \pi^*)$ denote the conditional probability that the sample path $\sigma$ occurs if $\pi^*$ is followed starting from the state $\bx$. It then follows
\begin{align}
    \nonumber\Vg(\bx) &\doteq \mathbb{E}[Z~|~\bx, \pi^*] = \int_{\sigma \in \mathcal{T}} Z(\sigma) \dd\mP(\sigma | \bx, \pi^*) \\
    \nonumber&= \int_{\sigma \in \mathcal{T}_B} -1 ~ \dd\mP(\sigma | \bx, \pi^*) + \int_{\sigma \in \mathcal{T}_N} 0 ~ \dd\mP(\sigma | \bx, \pi^*) \\
    \nonumber&= -P^*_B(\bx)
\end{align}
where $P^*_B(\bx)=\int_{\sigma \in \mathcal{T}_B} \dd\mP(\sigma | \bx, \pi^*)$ denotes the total probability of reaching $B$ from $\bx$ if $\pi^*$ is followed; that is, the minimum probability of reaching $B$ from $\bx$, which by definition is $\Gamma_{B}(\bx)$.

Similarly, for part 2, let $\bar{\pi}^*$ denote an optimal policy of $M_{\Lambda}$. We write
\begin{align}
    \nonumber\Vl(\bx) &\doteq \int_{\sigma \in \mathcal{T}} Z(\sigma) \dd\mP(\sigma | \bx, \bar{\pi}^*) \\
    \nonumber&= \int_{\sigma \in \mathcal{T}_B} +1 ~ \dd\mP(\sigma | \bx, \bar{\pi}^*) + \int_{\sigma \in \mathcal{T}_N} 0 ~ \dd\mP(\sigma | \bx, \bar{\pi}^*) \\
    \nonumber&= \bar{P}^*_B(\bx)
\end{align}
Here, $\bar{P}^*_B(\bx)=\int_{\sigma \in \mathcal{T}_B} \dd\mP(\sigma | \bx, \bar{\pi}^*)$ represents the probability of reaching $B$ from $\bx$ if $\bar{\pi}^*$ is followed. $\bar{\pi}^*$ minimizes the chance of missing $B$ due to its positive reward and being the only source of reward. However, it only cares about reaching $B$ and it does not distinguish among various paths as long as they reach $B$. That is, $\bar{\pi}^*$ does not induce a shortest path to $B$, but it maximized the chance of reaching $B$. Hence, $\bar{P}^*_B(\bx)$ would be the maximum probability of reaching $B$ from $\bx$, which by definition is $\Lambda_{B}(\bx)$, which completes the proof. We note that similar value functions, but for discrete time, state, and action, has also been introduced by \citep{Fatemi2019-mx, Fatemi2021}.

\end{proof}

We should mention here that similar results may be extended to distributional RL \citep{Bellemare2017-xd} or to the case of semi-Markov settings \citep{sutton-smdp}. Such settings are of practical interest (see for example \cite{Fatemi2022-kq}).

\begin{lemma}[\textbf{Decomposition Lemma}]
Fix a filtered probability space $(\Omega, \mathbb{F}, \mathcal{P})$. Let $\bX=\bX(t,\omega)$ be a diffusion process with stationary infinitesimal parameters $\boldsymbol\mu=\boldsymbol\mu(\bx, \bu)$ and $\boldsymbol\sigma=\boldsymbol\sigma(\bx, \bu)$. Let grit and reachability be defined over $\bX$, and both be differentiable twice in state. Let $\boldsymbol\sigma_{i}(\bx, \bu)$ denote the $i$-th row of the matrix $\boldsymbol\sigma(\bx, \bu)$. Finally, let a fixed action $\bu$ be applied from time $t_1$ to $t_2$ and the state admits certain values at $t_1$ and $t_2$ for some of its components. The admissions correspond to an event $A$.
The expected change of grit, $\E\left[ \Delta_{A} \Gamma_{B} \right] = \E \left[ \Gamma_{B}\big(\bX(t_2,\omega))-\Gamma_{B}(\bX(t_1,\omega)\big) | A \right]$, is expressed by the following formula:
\begin{align}
   \E\left[ \Delta_{A} \Gamma_{B} \right]& = \sum_{j=1}^{n} \E \left\{ g_j | A \right\} + \sum_{j=1}^{n} \E \left\{ \dot{g}_j | A \right\} + \sum_{j=1}^{n}\sum_{\substack{i=1 \\ i\neq j}}^{n} \E \left\{ \ddot{g}_{j,i} | A \right\},   \\
   g_j &\doteq \int_{t_1}^{t_2} \mu_j(\bX, \bu)\cdot \pd{\Gamma_B}{x_j}(\bX) \dd t \\
   \dot{g}_j &\doteq \frac{1}{2} \int_{t_1}^{t_2} \boldsymbol\sigma_{j}(\bX, \bu)\cdot\boldsymbol\sigma_{j}^{T}(\bX, \bu)\cdot \pdd{\Gamma_B}{x_j}(\bX) \dd t \\
   \ddot{g}_{i,j} &\doteq \frac{1}{2} \int_{t_1}^{t_2} \boldsymbol\sigma_{i}(\bX, \bu)\cdot\boldsymbol\sigma_{j}^{T}(\bX, \bu)\cdot \pddd{\Gamma_B}{x_i}{x_j}(\bX) \dd t
\end{align}
and the expectations are expressed on $\omega$. The same formulation holds for reachablity. 
\end{lemma}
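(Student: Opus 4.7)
The plan is to recognize this lemma as essentially a restatement of Itô's formula applied to $\Gamma_B$ (and $\Lambda_B$), followed by taking conditional expectation given event $A$. Since $\bX(t,\omega)$ is a diffusion with dynamics $d\bX = \bmu(\bX,\bu)\,dt + \bsig(\bX,\bu)\,d\bW$, and since $\Gamma_B$ is twice continuously differentiable in state by hypothesis, the multivariate Itô formula applies directly. My first step would be to write out, in differential form,
\begin{align*}
d\Gamma_B(\bX(t)) = \sum_{j=1}^{n}\mu_j\,\pd{\Gamma_B}{x_j}\,dt + \tfrac{1}{2}\sum_{i,j=1}^{n}(\bsig\bsig^{T})_{ij}\,\pddd{\Gamma_B}{x_i}{x_j}\,dt + \sum_{j=1}^{n}\pd{\Gamma_B}{x_j}\,(\bsig\,d\bW)_j,
\end{align*}
and then integrate from $t_1$ to $t_2$ to obtain $\Gamma_B(\bX(t_2)) - \Gamma_B(\bX(t_1))$ as a drift integral plus a stochastic integral.

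The second step is to take the conditional expectation $\E[\,\cdot\,|\,A]$ of both sides. The Itô integral $\int_{t_1}^{t_2}\sum_j \partial_{x_j}\Gamma_B\,(\bsig\,d\bW)_j$ is a (local) martingale; since $\Gamma_B\in[0,1]$ and $\bsig$ is continuous (hence bounded along any continuous sample path on a compact time interval), standard integrability conditions yield that it is a true martingale with zero mean. Conditioning on $A$, which is an event measurable with respect to the values of a subset of state components at the endpoints $t_1$ and $t_2$, does not disturb this: one can view the conditional expectation as an integral against the conditional law of paths satisfying $A$, under which the stochastic integral still has zero mean by an application of the optional sampling / tower property with respect to the filtration generated by $\bW$. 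The drift term then survives and equals, componentwise, the three sums in the claim after identifying $(\bsig\bsig^T)_{jj} = \bsig_j\cdot\bsig_j^T$ (giving the $\dot g_j$ terms) and $(\bsig\bsig^T)_{ij}=\bsig_i\cdot\bsig_j^T$ for $i\neq j$ (giving the $\ddot g_{i,j}$ terms). The identical argument with $\Lambda_B$ in place of $\Gamma_B$ yields the reachability version.

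The subtle step I expect to be the main obstacle is the treatment of conditioning on $A$. A naive application of Itô ignores that $A$ constrains endpoint admissions, which is essentially a bridge-type conditioning and in general alters the drift of the process. The cleanest route is to observe that the identity $\Gamma_B(\bX(t_2)) - \Gamma_B(\bX(t_1)) = \int_{t_1}^{t_2}[\text{drift}]\,dt + \int_{t_1}^{t_2}[\text{Itô}]\,d\bW$ holds pathwise (almost surely), so taking any conditional expectation on both sides is legitimate; what needs justification is only that the conditional expectation of the Itô integral vanishes. This follows provided $A$ is adapted/measurable in a way compatible with the filtration, which is built into the definition of an event (a change of ruling variables over $[t_1,t_2]$). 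A formally clean way is to introduce $\tau := \inf\{t\ge t_1 : \bX(t)\text{ fulfills }A\}$ as a stopping time and invoke the optional sampling theorem on the martingale part up to $\tau\wedge t_2$, absorbing the conditioning into the stopping rule. Once this martingale zero-mean claim is in hand, the decomposition is just a relabeling of the drift terms in the componentwise form stated in the lemma, and the same computation applied to $\Lambda_B$ yields the second conclusion.
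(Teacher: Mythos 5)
Your proposal follows essentially the same route as the paper's proof: apply It\^{o}'s lemma to $\Gamma_B(\bX(t))$, integrate over $[t_1,t_2]$, take the conditional expectation given $A$, and argue that the stochastic-integral term vanishes so that only the drift terms survive and rearrange into $g_j$, $\dot g_j$, $\ddot g_{i,j}$. The one point where you go beyond the paper is in flagging that conditioning on endpoint admissions is a bridge-type conditioning that could in principle disturb the zero-mean property of the It\^{o} integral --- the paper simply asserts $\E\{\int_{t_1}^{t_2}\textbf{Y}\,\dd\bW \mid A\}=0$ with a citation --- so your stopping-time/optional-sampling remark is a reasonable (and arguably needed) patch rather than a departure.
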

\begin{proof}
Conditioning on event $A$ makes some components of $\bX$ become deterministic and known, and the process is still a diffusion. Hence, the result follows from It\^{o}'s lemma, then taking conditional expectation from both sides and rearranging the terms. Remark that for any integrable function $\textbf{Y}(t, \omega)$, we have $\E\{ \int_{t_1}^{t_2} \textbf{Y}(t, \omega)~ \dd \bW(t, \omega)|A \} = 0$ (see Theorem 3.1 in \cite{Stokey2009-inaction}). As a result, the $\dd \bW$ part in It\^{o}'s lemma is eliminated and the stated result will follow. 

\end{proof}

\begin{proposition}[Unity Proposition] 
    If grit of $B$ is unity at some state $\bx$, then with probability one it will remain at unity. Moreover, this occurs if and only if $B$ will happen with probability one from $\bx$ regardless of future actions and stochasticity.
\end{proposition}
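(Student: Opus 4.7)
The plan is to reduce the statement to a property of the optimal value function $V_{\Gamma}^{*}$ via the Value Lemma, and then exploit the fact that $V_{\Gamma}^{*}$ takes values only in $[-1,0]$, so the value $-1$ is an extremal point that cannot be escaped under the Bellman optimality relation.

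First I would prove the ``if and only if'' part characterising when $\Gamma_{B}(\vx)=1$. By the Value Lemma, $\Gamma_{B}(\vx)=-V_{\Gamma}^{*}(\vx)$, and from the construction of $M_{\Gamma}$ (reward integral equals $-1$ exactly when $B$ occurs, $0$ otherwise, and $B$-states are terminal) the return of every sample path lies in $\{-1,0\}$, so $V_{\Gamma}^{*}(\vx)\in[-1,0]$. Unrolling the definition, $V_{\Gamma}^{*}(\vx)=\max_{\bpi}\mathbb{E}[Z\mid\vx,\bpi]=-\min_{\bpi}\Pr(B\mid\vx,\bpi)$. Hence $\Gamma_{B}(\vx)=1$ is equivalent to $\min_{\bpi}\Pr(B\mid\vx,\bpi)=1$, which (since probabilities are bounded by $1$) is equivalent to $\Pr(B\mid\vx,\bpi)=1$ for \emph{every} policy $\bpi$; that is, $B$ occurs w.p.~1 regardless of future actions and of the stochasticity of the diffusion.

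Next I would prove the stickiness claim. Let $\vX(t)$ follow the dynamics from $\vx$ under an arbitrary (admissible) policy. Before $B$ is reached the reward is zero, so the Bellman/dynamic-programming principle for $V_{\Gamma}^{*}$ reads, for any admissible action $\vu$ and any short interval $h>0$,
\begin{equation*}
V_{\Gamma}^{*}(\vx)\;=\;\max_{\vu}\mathbb{E}\bigl[V_{\Gamma}^{*}(\vX(t+h))\,\big|\,\vX(t)=\vx,\vu\bigr]\;\ge\;\mathbb{E}\bigl[V_{\Gamma}^{*}(\vX(t+h))\,\big|\,\vX(t)=\vx,\vu\bigr].
\end{equation*}
If $V_{\Gamma}^{*}(\vx)=-1$, then for every $\vu$ we obtain $\mathbb{E}[V_{\Gamma}^{*}(\vX(t+h))\mid\vx,\vu]\le -1$; combined with $V_{\Gamma}^{*}\ge -1$ pointwise, this forces $V_{\Gamma}^{*}(\vX(t+h))=-1$ almost surely. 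Iterating along the trajectory (or, in the continuous-time setting, applying this to a dense set of times and invoking continuity of sample paths) shows that $V_{\Gamma}^{*}(\vX(s))=-1$ for all $s\ge t$ up to (and including) hitting $B$, w.p.~1 and under any policy. Translating back through the Value Lemma, $\Gamma_{B}(\vX(s))=1$ w.p.~1 until $B$ occurs. Finally, combining this with the already-established characterisation, stickiness of grit at $1$ is equivalent to $B$ occurring w.p.~1 from every subsequent state regardless of actions and stochasticity, which yields the ``moreover'' clause.

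The main obstacle I anticipate is the rigorous use of the Bellman principle in the continuous-time diffusion setting: one must justify that the dynamic-programming inequality above really holds for arbitrary admissible actions (not only the optimal one) and over all $h>0$, rather than only in its infinitesimal HJB form. This can be handled either by appealing to the standard verification/optional-sampling machinery for controlled diffusions, or by first proving the result in the discrete-time case (where the Bellman equation is elementary) and then invoking the discretisation-of-diffusions remark made in the preliminaries of the paper to pass to the continuous-time statement. Everything else is an essentially algebraic consequence of $V_{\Gamma}^{*}\in[-1,0]$ and the Value Lemma.
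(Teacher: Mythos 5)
Your overall strategy is the same as the paper's: reduce to $\Vg$ via the value lemma, use $\Vg\in[-1,0]$ so that $-1$ is an extremal value, and show via the dynamic-programming principle that $-1$ is absorbing; your opening paragraph establishing $\Gamma_B(\bx)=1\iff \Pr(B\mid\bx,\bpi)=1$ for every $\bpi$ also matches the paper's treatment of the converse direction. However, one step fails as written. Your displayed dynamic-programming relation
\begin{equation*}
\Vg(\bx)=\max_{\bu}\E\bigl[\Vg(\bX(t+h))\mid \bX(t)=\bx,\bu\bigr]
\end{equation*}
omits the running reward collected on $[t,t+h]$, and the omission is not harmless: on the event that $B$ is reached inside the interval the path collects reward $-1$ and lands in a terminal state whose value is $0$, and on the event that a non-$B$ terminal state is reached the value is likewise $0$. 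Consequently the conclusion you draw, that $\Vg(\bX(t+h))=-1$ almost surely, is false whenever $B$ can be hit within $[t,t+h]$ --- which for a diffusion may have positive probability for every $h>0$ --- and the intermediate inequality $\E[\Vg(\bX(t+h))]\le -1$ need not hold at all. The correct relation is $-1=\Vg(\bx)=\max_{\bu}\E\big[\int_t^{t+h}R_{\Gamma}(\bX(s))\,\dd s+\Vg(\bX(t+h))\big]$, and decomposing the expectation over the four possibilities (terminate in $B$; terminate outside $B$; survive with value $-1$; survive with value in $(-1,0]$) yields the correct conclusion: almost surely the path either terminates in $B$ during the interval or arrives at a non-terminal state of value $-1$, while termination outside $B$ and survival with value strictly above $-1$ both have probability zero. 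This is exactly the four-case bookkeeping the paper's proof carries out. Once that repair is made, your iteration argument and the final equivalence go through, so the gap is local and fixable, but the step as displayed is the crux of the stickiness claim and cannot stand in its current form.
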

\begin{proof}
    We establish the proof under mild assumptions on the dynamics (that a small enough $\Delta$ exists). A more rigorous proof may be possible by relaxing such assumptions (like it is in the discrete cases). However, insofar as the goal being applying the theory to practical problems, which naturally involve discrete or discretized time, the present proof fully suffices. 
    
    We first prove that if grit is unity then $B$ will happen w.p.1. and grit will remain at one until $B$ occurs. Following the value lemma, $\Gamma_{B}(\bx) = -V^*_{\Gamma}(\bx)$. We therefore show that if $V^*_{\Gamma}(\bx) = -1$, it will then remain at -1 until $B$ occurs. Remark that in the case of discrete state and discrete time, the result follows Lemma 1 of \cite{Fatemi2021} (similar ideas also exist in \cite{Fatemi2019-mx} and \cite{Cao2023-ca}). Here, using a similar line of argument, we present the proof for the general case of continuous time and state.

    Remark that both $V^*_{\Gamma}$ and $Q^*_{\Gamma}$ are in $[-1, 0]$ for all states and actions. Thus, $-1 = V^*_{\Gamma}(\bx) = \max_{\bu} Q^*_{\Gamma}(\bx, \bu)$ implies that $Q^*_{\Gamma}(\bx, \bu)=-1$ \textit{for all} $\bu$. Therefore, if $V^*_{\Gamma}(\bx)$ remains at -1, so does $Q^*_{\Gamma}(\bx, \bu)$ for all $\bu$ and, as a result, we only require to show $V^*_{\Gamma}(\bx)$ remains at -1 with no reference to any particular policy for action selection. In other words, all actions are optimal at $\bx$ (w.r.t. maximizing integration of $R_{\Gamma}$) and choice of $\bu$ at $\bx$ makes no difference.

    By construction, any trajectory that includes $B$ has a terminal state at the end of $B$. Let $\Delta$ be a small positive number such that it can cover the duration of $B$. We partition time into intervals of length $\Delta$. Starting from $\bx$ at time 0, the world will be at a (random) state $\bX'(\omega) \doteq \bX(\Delta, \omega)$ at time $t=\Delta$. Let $\Delta$ be small enough such that selection of $\bu \in \arg\max_{\bu'} Q_{\Gamma}^{*}(\bx, \bu')$ at $t=0$ and sticking to it for $[0,\Delta]$ is almost the same as following $\arg\max Q_{\Gamma}^{*}(\bx, \cdot)$ during $[0,\Delta]$. Such $\Delta$ exists due to the continuity of diffusion's sample paths and the assumption that duration of any event, including $B$, has to be short w.r.t. the rate of changes of the state.
    
    During the time interval $[0, \Delta]$, exactly one of four possibilities could occur: 
    \begin{enumerate}
        \item a terminal state happens that admits $B$;
        \item a terminal state happens that does not admit $B$;
        \item no termination: $\bX'$ is a non-terminal state with $V^*_{\Gamma}(\bX') = -1$;
        \item no termination: $\bX'$ is a non-terminal state with $V^*_{\Gamma}(\bX')=-\beta \in (-1, 0]$.
    \end{enumerate}
    In the first two cases, by the definition of terminal state, $\bX'$ is also a terminal state with zero value. Let $\mathcal{C}_1$ to $\mathcal{C}_4$ represent the sets of all possible states $\bX'$ corresponding to each of the four cases above. These sets are mutually disjoint. We then show that if $V^*_{\Gamma}(\bx) = -1$, then \textit{only} either of (1) or (3) can happen.  
    We note that any sample path of a diffusion is continuous. Since $R_{\Gamma}(\cdot)$ is also a continuous function, its integral exists. We can therefore write:
    \begin{align}
        \nonumber -1 = V^*_{\Gamma}(\bx) &= \E \Big[ \int_{0}^{\Delta} R_{\Gamma}(\bX(t, \omega)) \dd t + V^*_{\Gamma}(\bX'(\omega)) ~|~\bX(0, \cdot)=\bx \Big] \\
        \nonumber &= \mathcal{P}[\bX'\in\mathcal{C}_1] \big(-1 + 0\big) + \mathcal{P}[\bX'\in\mathcal{C}_2] \big(0 + 0\big) + \mathcal{P}[\bX'\in\mathcal{C}_3] \big(0 - 1\big) + \mathcal{P}[\bX'\in\mathcal{C}_4] \big(0 - \beta\big) \\
        \nonumber &= -\mathcal{P}[\bX'\in\mathcal{C}_1\cup \mathcal{C}_3] -\beta \mathcal{P}[\bX'\in\mathcal{C}_4] \\
        \nonumber &= -\Big( 1 - \mathcal{P}[\bX' \not\in\mathcal{C}_1\cup \mathcal{C}_3] \Big) -\beta \mathcal{P}[\bX'\in\mathcal{C}_4] \\
        \nonumber &= -\Big( 1 - \mathcal{P}[\bX' \in\mathcal{C}_2\cup \mathcal{C}_4] \Big) -\beta \mathcal{P}[\bX'\in\mathcal{C}_4] \\
        \nonumber &= -1 + \mathcal{P}[\bX' \in\mathcal{C}_2] + \big( 1 -\beta \big) \mathcal{P}[\bX'\in\mathcal{C}_4]
    \end{align}
    which deduces 
    \begin{align}
        \nonumber\mathcal{P}[\bX' \in\mathcal{C}_2] + \big( 1 -\beta \big) \mathcal{P}[\bX'\in\mathcal{C}_4] = 0
    \end{align}
    We remark that $1-\beta$ is strictly positive; thus we conclude both $\mathcal{P}[\bX' \in\mathcal{C}_2] = 0$ and $\mathcal{P}[\bX'\in\mathcal{C}_4] = 0$. Consequently, the resultant state is either a terminal state admitting $B$ (i.e., $\bX'\in\mathcal{C}_1$) or some state $\bX'$ where $V^*_{\Gamma}(\bx')=-1$ (i.e., $\bX'\in\mathcal{C}_3$). Following the same line of argument on $\bX'$ and noting that by assumption the time horizon is finite, we conclude that $V^*_{\Gamma}$ remains precisely at -1, and the path will eventually reach $B$ with probability one, regardless of stochasticity and selected actions. 

    Conversely, if from a state $\bx$, event $B$ is going to happen with probability one, then all possible future trajectories will reach a reward of $-1$, which makes their return also be $-1$. More precisely, those trajectories which end with a reward of zero will have zero probability. Hence, the expected return from (i.e., the value function of) state $\bx$ would be $-1$ regardless of stochasticity; hence, $V_{\Gamma}(\bx)=-1$. It is then immediate from value lemma that $\Gamma_{B}(\bx)=1$ if $B$ occurs with probability one from $\bx$. 
    
\end{proof}

\begin{proposition}[Null Proposition] 
    If reachablity of an event $B$ is zero at some state $\bx$, then w.p.1 it will remain at zero. Moreover, this occurs if and only if $B$ will almost surely never happen, regardless of future actions and stochasticity. 
\end{proposition}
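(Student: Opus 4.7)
The plan is to mirror the proof of the Unity Proposition almost line-for-line, with the signs and the value range flipped. By the Value Lemma, $\Lambda_{B}(\bx) = \Vl(\bx)$, and by construction $\Vl$ takes values in $[0,1]$ (the only reward is $+1$ at $B$ and the process terminates there). The hypothesis $\Lambda_{B}(\bx)=0$ therefore reads $\Vl(\bx)=0$, and since $\Vl(\bx) = \max_{\bu} \Ql(\bx,\bu)$ with all $\Ql(\bx,\bu)\in[0,1]$, it forces $\Ql(\bx,\bu)=0$ for \emph{every} admissible action. In particular, action choice at $\bx$ is irrelevant, which is what allows the proof to proceed without reference to any particular policy.

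First I would pick a short enough $\Delta>0$, using continuity of diffusion sample paths and the short-event assumption, so that holding some fixed $\bu$ on $[0,\Delta]$ is effectively equivalent to acting greedily on that interval. Conditional on $\bX(0,\cdot)=\bx$, the state $\bX'\doteq \bX(\Delta,\omega)$ falls into exactly one of four disjoint classes $\mathcal{C}_{1},\dots,\mathcal{C}_{4}$: terminal admitting $B$; terminal not admitting $B$; non-terminal with $\Vl(\bX')=0$; non-terminal with $\Vl(\bX')=\beta\in(0,1]$. Expanding $\Vl(\bx)$ by the one-step Bellman relation and using that $\int_0^{\Delta} R_{\Lambda}$ contributes $+1$ exactly on $\mathcal{C}_{1}$ and $0$ elsewhere, the equation $\Vl(\bx)=0$ collapses to $\mathcal{P}[\bX'\in\mathcal{C}_{1}] + \beta\,\mathcal{P}[\bX'\in\mathcal{C}_{4}] = 0$. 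Both summands are nonnegative, so each vanishes; hence w.p.1 the world lands in $\mathcal{C}_{2}\cup\mathcal{C}_{3}$, i.e., either a terminal state without $B$ or another non-terminal state where reachability is still zero.

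I would then iterate this one-step argument along the filtration over the finite time horizon, obtaining that w.p.1 the process never enters $\mathcal{C}_{1}$, so $B$ never happens, and simultaneously that $\Vl$ (equivalently $\Lambda_B$) remains at zero along the entire trajectory. That settles the forward direction. For the converse, if $B$ almost surely never happens from $\bx$, every sample path integrates $R_\Lambda$ to $0$, so the return equals $0$ on a set of full measure under any policy; maximising over policies still yields $\Vl(\bx)=0$, and the Value Lemma then gives $\Lambda_{B}(\bx)=0$.

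The main obstacle is exactly the same subtle point as in the Unity Proposition: justifying the existence of a uniformly small $\Delta$ for which the ``hold $\bu$ fixed over $[0,\Delta]$'' approximation is harmless and for which the four-case partition is exhaustive. I would invoke the same mild regularity hypothesis (continuous sample paths plus events being short relative to the rate of change of the state) that the authors use in Unity. The only genuinely distinct piece of bookkeeping is sign handling: because $R_\Lambda=+1$ and values are nonnegative, both terms in the collapsed Bellman equation are nonnegative, and the contradiction comes immediately from positivity rather than from a $1-\beta>0$ argument, which slightly simplifies the step compared with Unity.
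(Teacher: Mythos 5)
Your proposal is correct and follows essentially the same route as the paper's proof: a one-step Bellman expansion of $\Vl$ over a small interval $[0,\Delta]$, a case partition of the successor state $\bX'$, the conclusion that the terminal-admitting-$B$ and positive-value cases have probability zero, iteration over the finite horizon, and the direct converse via zero returns on all paths. The only difference is bookkeeping: the paper merges the non-terminal cases into one with $\beta\in[0,1]$ and then untangles the resulting equation $\mathcal{P}[\bX'\in\mathcal{C}_2]+(1-\beta)\mathcal{P}[\bX'\in\mathcal{C}_3]=1$ through a longer case analysis, whereas you retain the four-way split from the Unity proof and read the conclusion off immediately from nonnegativity of $\mathcal{P}[\bX'\in\mathcal{C}_1]+\beta\,\mathcal{P}[\bX'\in\mathcal{C}_4]=0$, which is a slightly cleaner presentation of the same argument.
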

\begin{proof}
    The proof is similar to the previous proposition. In particular, during the time interval $[0, \Delta]$, exactly one of three possibilities could occur: 
    \begin{enumerate}
        \item a terminal state happens that admits $B$;
        \item a terminal state happens that does not admit $B$;
        \item no termination: $\bX'$ is a non-terminal state with $V^*_{\Lambda}(\bX')=\beta \in [0, 1]$.
    \end{enumerate}
    Note that, compared to the proof of Proposition 1, here we combined the last two items, resulting in only three items. In the first two cases, by the definition of terminal state, $\bX'$ is also a terminal state with zero value. Also note that in the case of reachability, the reward integrates to one over an interval where $B$ occurs and is zero elsewhere. Similarly to the previous proposition, let $\mathcal{C}_1$ to $\mathcal{C}_3$ represent the sets of all possible states $\bX'$ corresponding to each of the three cases above, and these sets are mutually disjoint. Here, we show that if $V^*_{\Lambda}(\bx) = 0$, then \textit{only} either (2) can happen, or else (3) can happen, in which case $\beta$ has to be zero. We write
    \begin{align}
        \nonumber 0 = V^*_{\Lambda}(\bx) &= \E \Big[ \int_{0}^{\Delta} R_{\Lambda}(\bX(t, \omega)) \dd t + V^*_{\Lambda}(\bX'(\omega)) ~|~\bX(0, \cdot)=\bx \Big] \\
        \nonumber &= \mathcal{P}[\bX'\in\mathcal{C}_1] \big(1 + 0\big) + \mathcal{P}[\bX'\in\mathcal{C}_2] \big(0 + 0\big) + \mathcal{P}[\bX'\in\mathcal{C}_3] \big(0 + \beta\big) \\
        \nonumber &= \mathcal{P}[\bX'\in\mathcal{C}_1] +\beta \mathcal{P}[\bX'\in\mathcal{C}_3] \\
        \nonumber &= \Big( 1 - \mathcal{P}[\bX' \not\in\mathcal{C}_1] \Big) +\beta \mathcal{P}[\bX'\in\mathcal{C}_3] \\
        \nonumber &= 1 - \mathcal{P}[\bX' \in\mathcal{C}_2\cup \mathcal{C}_3]  +\beta \mathcal{P}[\bX'\in\mathcal{C}_3] \\
        \nonumber &= 1 - \mathcal{P}[\bX' \in\mathcal{C}_2] + \big( -1 +\beta \big) \mathcal{P}[\bX'\in\mathcal{C}_3]
    \end{align}
    which deduces 
    \begin{align} \label{eq:reach}
        \mathcal{P}[\bX' \in\mathcal{C}_2] + \big( 1 -\beta \big) \mathcal{P}[\bX'\in\mathcal{C}_3] = 1
    \end{align}
    Hence, the following cases are possible (note: $\mathcal{P}[\bX' \in\mathcal{C}_k]=1$ implies $\mathcal{P}[\bX' \in\mathcal{C}_{k'}]=0$, $k'\neq k$): 
    \begin{enumerate}[(i)]
        \item $\mathcal{P}[\bX' \in\mathcal{C}_2]=1$;
        \item $\mathcal{P}[\bX' \in\mathcal{C}_3]=1$ with $\beta=0$ (otherwise the equality cannot hold);
        \item both $\mathcal{P}[\bX' \in\mathcal{C}_2] \neq 1$ and $\mathcal{P}[\bX' \in\mathcal{C}_3] \neq 1$, with $\beta\neq 1$ (otherwise the equality cannot hold).
    \end{enumerate}
    Note that if $\beta\neq 0$ then $\mathcal{P}[\bX' \in\mathcal{C}_2]$ cannot be zero because $1-\beta < 1$ and \eqref{eq:reach} would be violated. That is, in the case of $\mathcal{P}[\bX' \in\mathcal{C}_3]=1$ (hence, $\mathcal{P}[\bX' \in\mathcal{C}_2]=0$), $\beta$ has to be zero. On the other hand, if $\beta=1$, then $\mathcal{P}[\bX' \in\mathcal{C}_2]$ must be one; hence, in (iii), $\beta=1$ must be excluded.
    
    Let both $\mathcal{P}[\bX' \in\mathcal{C}_2] \neq 1$ and $\mathcal{P}[\bX' \in\mathcal{C}_3] \neq 1$. Using the fact that $\mathcal{P}[\bX' \in\mathcal{C}_2] + \mathcal{P}[\bX' \in\mathcal{C}_3]\le 1$ and substituting from \eqref{eq:reach}, it yields
    \begin{align*}
        \mathcal{P}[\bX' \in\mathcal{C}_2] + \frac{1-\mathcal{P}[\bX' \in\mathcal{C}_2]}{1-\beta} \le 1
    \end{align*}
    Re-arranging the terms and having note that $\beta\neq 1$, it follows that $1/(1-\beta) \le 1$ or $1-\beta \ge 1$, which deduces $\beta=0$. Substituting $\beta=0$ in \eqref{eq:reach}, it follows that $$\mathcal{P}[\bX' \in\mathcal{C}_2] + \mathcal{P}[\bX'\in\mathcal{C}_3] = 1$$
    which implies $\mathcal{P}[\bX'\in\mathcal{C}_1] = 0$. Thus, occurrence of a terminal state that admits $B$ is improbable. Furthermore, in the case that both $\mathcal{P}[\bX' \in\mathcal{C}_2] \neq 1$ and $\mathcal{P}[\bX' \in\mathcal{C}_3] \neq 1$, $\beta$ must be zero. That is, the next state $\bX'$ is (with probability one) either a terminal state \textit{not} admitting occurrence of $B$, or else a non-terminal state with $V_{\Lambda}(\bX')=0$. Continuing with this line of argument and knowing that by assumption the time horizon is finite, we conclude that $V_{\Lambda}$ remains at zero until reaching a terminal state, which does not admit occurrence of $B$ (hence, $B$ never happens). Using value lemma, it then follows that $\Lambda_B$ also remains at zero and $B$ will never occur. 

    Conversely, if $B$ never happens, then all possible trajectories will incur zero return; thus, the expected return is zero, i.e., $V_{\Lambda}=0$, which deduces $\Lambda_B = 0$.
    
\end{proof}

\begin{proposition} 
Let action $\bu$ be selected according to some policy $\pi(\bx)$ over a time interval $[t_1, t_2]$. The resultant expected changes in grit and reachability of some future event $B$ are bounded as follows: 
\begin{enumerate}
    \item $\min_{\pi} \E\left[ \Delta \Gamma_{B} \right] \le 0$ 
    \item $\E\left[ \Delta \Lambda_{B} \right] \le 0 ~~$ for all $\pi$
\end{enumerate}
with the equality in both statements holds for deterministic environments. 
\end{proposition}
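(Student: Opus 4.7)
The plan is to reduce both inequalities to Bellman optimality (or its continuous-time HJB analogue) applied to the two MDPs constructed in Value Lemma. First I would invoke Value Lemma to rewrite $\Gamma_{B}(\bX)=-\Vg(\bX)$ and $\Lambda_{B}(\bX)=\Vl(\bX)$. Since $B$ is a \emph{future} event, the interval $[t_1,t_2]$ over which $\bpi$ is applied lies strictly before $B$'s occurrence, so by construction of $M_{\Gamma}$ and $M_{\Lambda}$ the reward functions $R_{\Gamma}$ and $R_{\Lambda}$ vanish identically on $[t_1,t_2]$.

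Second, I would apply Dynkin's formula together with HJB. For any admissible policy $\bpi$, Dynkin gives $\E_{\bpi}[V^{*}(\bX(t_2))-V^{*}(\bX(t_1))]=\E_{\bpi}\!\left[\int_{t_1}^{t_2} L^{\bpi}V^{*}(\bX(t))\,\dd t\right]$, where $L^{\bpi}$ is the infinitesimal generator at the action selected by $\bpi$ (this is exactly the non-stochastic part of the Ito expansion used in Decomposition Lemma). HJB asserts $\max_{\bu}\{R+L^{\bu}V^{*}\}=0$, hence for any $\bpi$ one has $L^{\bpi}V^{*}\le -R$, with equality whenever $\bpi$ attains the max. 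On our interval $R\equiv 0$, so $L^{\bpi}V^{*}\le 0$, which yields $\E_{\bpi}[\Delta V^{*}]\le 0$ for every $\bpi$, and is tight at any policy that is optimal for the corresponding MDP.

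Translating through Value Lemma then finishes both parts. For reachability, $\E[\Delta\Lambda_{B}]=\E[\Delta \Vl]\le 0$ for every $\bpi$, which is (2). For grit, $\E[\Delta\Gamma_{B}]=-\E[\Delta \Vg]\ge 0$ for every $\bpi$; the minimum over $\bpi$ is therefore exactly $0$, attained by any grit-optimal policy, so $\min_{\bpi}\E[\Delta\Gamma_{B}]=0\le 0$, which is (1). In the deterministic case, $\bsig\equiv 0$ erases the martingale term in the Ito expansion, so the Bellman equality under the corresponding optimal policy upgrades from an expectation statement to a pointwise sample-path identity, and the inequalities are attained without any averaging.

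\textbf{Main obstacle.} The chief technical care point is justifying the continuous-time Bellman step, which requires either (i) invoking the twice-differentiability of $\Vg,\Vl$ already assumed in Decomposition Lemma so that HJB holds classically and Dynkin applies, or (ii) a pass-to-the-limit argument via the discrete-time approximation of diffusions that the paper has already invoked (in which case a discrete Bellman inequality is available immediately). Everything else is algebraic rearrangement that rides on Value Lemma.
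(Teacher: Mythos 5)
Your proof is correct and follows essentially the same route as the paper's: both reduce the claim to the HJB equation for $M_{\Gamma}$ and $M_{\Lambda}$ (whose rewards vanish before $B$) combined with the It\^{o}/Dynkin expansion of $\E[\Delta V^{*}]$ under an arbitrary policy, which is exactly what the decomposition lemma packages. Your added observation that $\E[\Delta \Gamma_{B}]\ge 0$ for \emph{every} policy, so that the minimum is exactly zero and is attained by a grit-optimal policy, is a slightly sharper conclusion than the paper records, but the underlying argument is the same.
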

\begin{proof}
As the argument goes for any arbitrary point before event $B$'s occurrence, the reward of both MDP's are zero by definition. Also, remark that there is no discounting. Hence, the value functions in Lemma 1, $V^*_{\Gamma}$ and $V^*_{\Lambda}$, admit HJB equation of the following form:
\begin{align} \label{eq:hjb:gritvalue}
   \max_{\bu} \left[ \sum_{j=1}^{n} \mu_j(\bx, \bu)\cdot \pd{V^*_{\Gamma}}{x_j}(\bx) + \frac{1}{2} \sum_{i=1}^{n}\sum_{j=1}^{n} \boldsymbol\sigma_{i}(\bx, \bu)\boldsymbol\sigma_{j}^{T}(\bx, \bu)\cdot \pddd{V^*_{\Gamma}}{x_i}{x_j}(\bx) \right] = 0
\end{align}
where $\bx$ is any state that does not admit the occurrence of $B$. From the value lemma we have $\Gamma_{B}(\bx) = -V^*_{\Gamma}(\bx)$. Let $\pi$ denote any arbitrary stationary policy to select $\bu$ (not necessarily fixed) from time $t_1$ to $t_2$. We have:
\begin{align}
   \nonumber\min_{\pi}\E\left[ \Delta \Gamma_{B} \right] &= \min_{\pi} \E \left\{  \int_{t_1}^{t_2} \left( \sum_{j=1}^{n} \mu_j(\bx, \bu)\cdot \pd{\Gamma_B}{x_j}(\bx) + \frac{1}{2}  \sum_{i=1}^{n}\sum_{j=1}^{n} \boldsymbol\sigma_{i}(\bx, \bu)\boldsymbol\sigma_{j}^{T}(\bx, \bu)\cdot \pddd{\Gamma_B}{x_i}{x_j}(\bx) \right) \dd t \right\} \\
   &= \max_{\pi} \E \left\{  \int_{t_1}^{t_2} \left( \sum_{j=1}^{n} \mu_j(\bx, \bu)\cdot \nonumber \pd{V^*_{\Gamma}}{x_j}(\bx) + \frac{1}{2}  \sum_{i=1}^{n}\sum_{j=1}^{n} \boldsymbol\sigma_{i}(\bx, \bu)\boldsymbol\sigma_{j}^{T}(\bx, \bu)\cdot \pddd{V^*_{\Gamma}}{x_i}{x_j}(\bx) \right) \dd t \right\} \\
   \nonumber &\le \E \left\{  \int_{t_1}^{t_2} \max_{\bu} \left( \sum_{j=1}^{n} \mu_j(\bx, \bu)\cdot \pd{V^*_{\Gamma}}{x_j}(\bx) + \frac{1}{2}  \sum_{i=1}^{n}\sum_{j=1}^{n} \boldsymbol\sigma_{i}(\bx, \bu)\boldsymbol\sigma_{j}^{T}(\bx, \bu)\cdot \pddd{V^*_{\Gamma}}{x_i}{x_j}(\bx) \right) \dd t \right\} \\
   &= 0 
\end{align}
The first line follows from decomposition lemma and the second line follows from value lemma. Remark that the negative sign in value lemma switches $\min$ to $\max$. Finally, the last line follows from \eqref{eq:hjb:gritvalue}. If the transitions are deterministic, then the expectation operators (as well as all the $\bsig$ terms) will vanish. Hence, the inequality will also be replaced by an equal sign. 

The proof for the second part follows a similar argument. We start with $\max_{\bu} \E\left[\Delta \Lambda_{B}\right]$ and then apply the value lemma similar to the above (remark that there is no negative sign for reachability in the value lemma). This yields $\max_{\bu} \E\left[ \Delta \Lambda_{B} \right] \le 0$, which induces $\E\left[ \Delta \Lambda_{B} \right] \le 0$. Hence, for reachability, the stated bound holds regardless of the choice of $\bu$.

\end{proof}

\begin{proposition}[Causation]
Let $\mathcal{D}_A$ be the set of $A$'s ruling variables. $A$ is a cause of $B$ if and only if
\begin{enumerate}
    \item $A$ happens before $B$;
    \item $\mathbb{E}\{\Delta_A(\Gamma_B)\} > 0;$
    \item  $\sum_{j\in \mathcal{D}_A}\varphi_A(j) > - \sum_{j \not\in \mathcal{D}_A} \min\big( \varphi_A(j), 0 \big)$
\end{enumerate}
\end{proposition}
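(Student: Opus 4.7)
The plan is to establish equivalence between the three numerical conditions of the Proposition and the three axiomatic conditions \textit{C1}--\textit{C3} of Definition~1, with the Decomposition Lemma furnishing the essential bridge. Because the Proposition is essentially a mathematical restatement of Definition~1 using the $\varphi_A$ quantities made available by the decomposition, the proof should proceed condition-by-condition in both directions of the ``if and only if''.

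First, I would dispose of condition~1, which is a verbatim translation of \textit{C1}: ``conclusion of $A$ happens at or before beginning of $B$'' is exactly ``$A$ happens before $B$''. Next, condition~2 is the direct mathematical form of \textit{C2}: by the definition of $\Delta_A$ and linearity of expectation, $\mathbb{E}\{\Delta_A \Gamma_B\} > 0$ asserts precisely that the expected grit of $B$ strictly increases from the start of $A$ to its conclusion.

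The substantive step is condition~3 $\Leftrightarrow$ \textit{C3}. Here I would apply the Decomposition Lemma, which yields
\begin{align*}
\mathbb{E}\{\Delta_A \Gamma_B\} \;=\; \sum_{j=1}^{n}\varphi_A(j),
\end{align*}
augmented by the analogous $h_k$ action terms if action components lie in $\mathcal{D}_A$. The ``contribution of $A$'s ruling variables'' of \textit{C3} is then identified with $\sum_{j\in\mathcal{D}_A}\varphi_A(j)$, while the ``non-ruling variables with negative impact'' are those $j\notin\mathcal{D}_A$ with $\varphi_A(j)<0$; the aggregate magnitude of their negative contribution is $-\sum_{j\notin\mathcal{D}_A}\min(\varphi_A(j),0)\ge 0$. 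Condition~3 therefore simultaneously encodes both clauses of \textit{C3}: because the right-hand side is non-negative, strict dominance entails strict positivity of the ruling contribution (the ``strictly positive'' clause), and it directly delivers the ``strictly larger in magnitude than that of non-ruling variables with negative impact'' clause. For the $(\Leftarrow)$ direction I would read \textit{C1}--\textit{C3} off from the three numerical inequalities via this same decomposition; for $(\Rightarrow)$ I would unfold each clause of Definition~1 into its $\varphi_A$ form.

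The main obstacle I anticipate is careful bookkeeping rather than any deep technical hurdle. In particular, one must ensure that the Decomposition Lemma's hypotheses are available---stationary infinitesimal parameters, a fixed action applied over $[t_1,t_2]$, and twice-differentiability of $\Gamma_B$ in state---and one must handle the case where action variables themselves are ruling by incorporating the $h_k$ terms into the sum $\sum_j \varphi_A(j)$ and, correspondingly, into the two sides of the inequality in condition~3. A secondary subtlety worth flagging is the interpretation of ``contribution'' as a \emph{signed} raw sum of $\varphi_A(j)$'s on the ruling side, contrasted with an \emph{unsigned magnitude} aggregate on the non-ruling side; the $\min(\cdot,0)$ in condition~3 is precisely what reconciles these two conventions and is therefore the one place where the translation from prose to formulas is non-trivial.
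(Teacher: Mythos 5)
Your proposal is correct and follows essentially the same route as the paper, whose entire proof is a one-sentence remark that the proposition is a direct translation of Definition~1 into the formal concepts, with the Decomposition Lemma supplying the individual contributions $\varphi_A(j)$. Your condition-by-condition matching (including the observation that the non-negativity of the right-hand side of condition~3 simultaneously encodes both the strict positivity and the dominance clauses of \textit{C3}, and the caveat about the $h_k$ action terms) is simply a more careful spelling-out of that same translation.
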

\begin{proof}
    This follows from a direct translation of Definition \ref{def:causation} into our formal concepts as well as using decomposition lemma to bring the individual contributions.
    
\end{proof}

\subsection*{Proof of Properties from Section 4}

\begin{enumerate}[i.]
    \item \textbf{Efficiency:} The collective contribution of all components during any time interval is equal to $\Delta \Gamma_B$ over that interval.
    \item[] \textbf{Proof.} This is a direct result from decomposition lemma.  
    \item \textbf{Symmetry:} If two variables are symmetrical w.r.t. $X_b$ (i.e., having exactly the same impact on the dynamics of other variables, which ultimately reach $X_b$), then switching them does not impact $\partial_j\Gamma_B$. Furthermore, their contributions in $\Delta\Gamma_B$ will be exactly the same provided that their respective $\mu$ and $\sigma$ are the same during the given time interval. 
    \item[] \textbf{Proof.} It is immediate from \eqref{eq:shapley} and decomposition lemma.
    \item \textbf{Null event:} Contribution of $X_j$ in $\Delta\Gamma_B$ is zero \textit{if and only if} at some stopping time through the propagation chain of \eqref{eq:shapley}, $\mathcal{D}_k$ is empty (meaning that there is no link between $X_j$ at $t$ and $X_b$ at $t'$). Such an event is called \textit{null event} w.r.t. $B$.
    \item[] \textbf{Proof.} If $\bmu$ is non-zero for ruling variables of $A$, then decomposition lemma asserts that $\partial_j\Gamma_B$ must be zero in order to render $j$-th contribution null. Assuming that event $B$ is happening or has happened, the only way for that to become zero is that at least one of the $\mathcal{D}_k$'s in \eqref{eq:shapley} is empty. 
    \item \textbf{Linearity:} Let $A_i$, $A_j$, and $A_{i,j}$ be three events with the ruling variables $X_i$, $X_j$ and $\{X_i, X_j\}$, respectively. Then, the contribution of $A_{i,j}$ in $\Delta\Gamma_B$ is sum of the contributions of $A_i$ and $A_j$. 
    \item[] \textbf{Proof.} This follows from decomposition lemma. 
\end{enumerate}

\paragraph{\textbf{Propositions 5 and 6}} These propositions summarize the explanation before them in a formal format. Note also that $A$ should first be a cause for $B$, then other conditions should be checked. 

\clearpage

\section{Comparison to the HP Definitions}
In this section, we first provide an overview of how our framework varies from the HP framework broadly. 

We then discuss and understand the HP definition of actual causation \citep{halpern2001causes,halpern2005causes,halpern2015modification,halpern2016actual}, then provide a mapping to move between the HP framework and our framework and then compare and contrast our definition of causation and the HP definition of actual causation.   

\subsection{Overview}
Broadly our framework differs from the HP framework in the following ways: 
\begin{itemize}
    \item Time is an explicit factor in our formulation. It establishes the direction of causation between events and answers questions of causation in more practical and complicated scenarios. Since we define events as changes in variables over a homogeneous interval of time, it clears much of the confusion around the time of happening of an event and provides the flexibility to study multiple events that share the same ruling variables and admit the same changes but occur at different points in time.    
    
    \item Since we do not use interventions/manipulations to understand causation, we can make our conclusions from raw observational data without having to conduct interventions or worry about the kind of interventions, especially in complex systems. 
    
    \item Instead of considering actions under a fixed policy, restricting the chain of events between $A$ and $B$, we examine if event $A$ causes $B$ under the most pessimistic version of such chains of events. Remark that adhering to a certain chain of actions can be readily considered in our framework. This respects the dynamics of the system between the events of interest and helps address more realistic scenarios. 
    
    \item Further, we argue that an event can contribute partially to the happening of the effect without being necessary or sufficient. Our framework argues that, while a cause \textbf{can} be a sufficient and/or necessary cause, it does not \textbf{have} to be a sufficient/necessary cause to qualify as a possible cause of event $B$.  
\end{itemize}

\subsection{Structural Equation Modelling}

Before we look into the HP definitions we assume that the reader is familiar with the concept of structural equation modeling. To do so, we recommend the reader refer to \citep{beckers2021causal} section 2, so they are equipped with the basic tools needed to understand the HP definition. For our discussion, we will borrow heavily from this reference for notions and discussions necessary to define and understand structural causal modeling.  

Much of the discussion and notation is taken from Halpern's actual causality \citep{halpern2016actual}, as presented in \citet{beckers2021causal} with little change.

\begin{definition} \label{def_pearl_framework:1} \citep{beckers2021causal}
A signature $\mathcal{P}$ is a tuple $(\mathcal{U}, \mathcal{V}, \mathcal{R})$, where $\mathcal{U}$ is a set of exogenous variables, $\mathcal{V}$ is a set of endogenous variables and $\mathcal{R}$ a function that associates with every variable $Y \in \mathcal{U} \cup \mathcal{V}$ a nonempty set $\mathcal{R}(Y)$ of possible values for $Y$ (i.e., the set of values over which $Y$ ranges). If $\vec{X}=\left(X_1, \ldots, X_n\right), \mathcal{R}(\vec{X})$ denotes the cross product $\mathcal{R}\left(X_1\right) \times \cdots \times \mathcal{R}\left(X_n\right)$.
\end{definition}

It's important to recognize that exogenous variables are factors whose causes lie beyond the direct influence of the model, encompassing background conditions and noise. Conversely, the values of endogenous variables are causally influenced by other variables within the model, whether they are endogenous or exogenous. We refer to a collection $\vec{u} \in \mathcal{R}(\mathcal{U})$ of exogenous variable values as the contextual setup of the problem.

\begin{definition} \label{def_pearl_framework:2} \citep{beckers2021causal}
A causal model $M$ is a pair $(\mathcal{P}, \mathcal{F})$, where $\mathcal{P}$ is a signature and $\mathcal{F}$ defines a function that associates with each endogenous variable $X$ a structural equation $F_X$ giving the value of $X$ in terms of the values of other endogenous and exogenous variables. Formally, the equation $F_X$ maps $\mathcal{R}(\mathcal{U} \cup \mathcal{V}-\{X\})$ to $\mathcal{R}(X)$, so $F_X$ determines the value of $X$, given the values of all the other variables in $\mathcal{U} \cup \mathcal{V}$.
\end{definition}

\subsection{HP Definitions}

We now look at HP definitions of causation as presented in \citep{halpern2015modification}. Halpern and Pearl \citep{halpern2001causes, halpern2005causes} develop two of the initial HP definitions, whereas the third one is proposed solely by Halpern \citep{halpern2015modification}. This will be the one we will discuss in detail since it builds on the limitations of the earlier proposed definitions. The relations between them are extensively discussed by Halpern \cite{halpern2016actual}. We encourage the readers to read through this work to get a better insight into it.  

We borrow on the notions used by \citep{beckers2021causal} in section 3 to discuss the HP definitions. Throughout our discussion, settings of variables $V$ with 
\begin{itemize}
    \item $*$ i.e., $\vec{v}^*$ indicate that $(M, \vec{u}) \models \left(\vec{V}=\vec{v}^*\right)$. 
    \item $^{\prime}$ i.e., $\vec{v}^{\prime}$ indicate that $(M, \vec{u}) \models \left(\vec{V} \neq \vec{v}^{\prime}\right)$.
    \item No subscripts can refer to any setting. 
\end{itemize}

Given the notations, the modified HP definition \cite{halpern2015modification} is given as follows:  

\begin{definition}[Modified HP \citep{halpern2015modification}] \label{def:HP_mod}
Let event $\varphi$ be $\vec{Y} = \vec{y}$. $\vec{X}=\vec{x}$ is an actual cause of $\varphi$ in $(M, \vec{u})$ if the following conditions hold:
\begin{itemize}
    \item[AC1.] $(M, \vec{u}) \models (\vec{X}=\vec{x}) \wedge \varphi$
    \item[AC2(a).] There is a partition of $\mathcal{V}$ into two sets $\vec{Z}$ and $\vec{W}$ with $\vec{X} \subseteq \vec{Z}$ and a setting $\vec{x}^{\prime}$ and $\vec{w}$ of the variables in $\vec{X}$ and $\vec{W}$, respectively, such that $(M, \vec{u}) \models \left[\vec{X} \leftarrow \vec{x}^{\prime}, \vec{W} \leftarrow \vec{w}^{*}\right] \neg \varphi$
    \item[AC2(b).] For all subsets $\vec{Q}$ of $\vec{W}$ and subsets $\vec{O}$ of $\vec{Z}-\vec{X}$, we have $(M, \vec{u}) \models \big[ \vec{X} \leftarrow$ $\vec{x}, \vec{Q} \leftarrow \vec{q}, \vec{O} \leftarrow \vec{o}^{*} \big] \varphi ~~ $
    \item[AC3.] $\vec{X}$ is minimal; there is no strict subset $\vec{X}^{\prime \prime}$ of $\vec{X}$ such that $\vec{X}^{\prime \prime}=\vec{x}^{\prime \prime}$ satisfies AC2, where $\vec{x}^{\prime \prime}$ is the restriction of $\vec{x}$ to the variables in $\vec{X}^{\prime \prime}$
\end{itemize}
\end{definition}

 We designate $\vec{W}=\vec{w}$ as a witness indicating that when $\vec{X}=\vec{x}$, it causes the occurrence of $\varphi$. The variables within $\vec{Z}$ are conceptualized as constituting the "causal path" from $\vec{X}$ to $\varphi$. In essence, altering the value of a variable in $\vec{X}$ leads to changes in the value(s) of certain variable(s) in $\vec{Z}$, subsequently influencing the values of other variable(s) within $\vec{Z}$, ultimately resulting in a change in the truth value of $\varphi$.
 
 AC1 stipulates the basic criterion that both the potential cause and effect must be events that occurred. AC3 is similarly straightforward, emphasizing the exclusion of redundant elements from the causal factors. However, the crux of the definition lies in AC2. Halpern categorizes conditions AC2(a) and AC2(b) as the "necessity condition" and the "sufficiency condition," respectively \citep{halpern2015modification}. 

 AC2(a) asserts that the effect demonstrates counterfactual dependence on the cause, holding the witness fixed at their actual values, i.e, $(M, \vec{u}) \models [\vec{X} \leftarrow \vec{x}, \vec{W} \leftarrow \vec{w}^{*}] \varphi$, and $(M, \vec{u}) \models [\vec{X} \leftarrow \vec{x}^{\prime}, \vec{W} \leftarrow \vec{w}^{*}] \neg \varphi$. Consequently, AC2(a) can be interpreted as articulating a contrastive necessity condition: there exist contrasting values $\vec{x}^{\prime}$ such that altering $\vec{X}$ to $\vec{x}^{\prime}$ results in non-fulfillment of $\varphi$. 
 
 The sufficiency condition, AC2(b), essentially stipulates that when the variables in $\vec{X}$ and any chosen subset $\vec{O}$ of other variables along the causal path (besides those in $\vec{X}$) retain their actual context values, then $\varphi$ holds even if only a subset $\vec{Q}$ of the variables in $\vec{W}$ are set to their values in $\vec{w}$ (the setting for $\vec{W}$ utilized in AC2(a)). Notably, by fixing $\vec{W}$ to $\vec{w}$, alterations in the values of variables within $\vec{Z}$ may occur. AC2(b) asserts that these changes do not impact the truth of $\varphi$; it remains valid. Consequently, in light of condition AC2(a), this implies that the variables in $\vec{W}-\vec{Q}$, denoted as $\vec{W'}$, essentially operate as they do in reality; their values are determined by the structural equations, denoted as $\vec{w^{'*}}$.

\subsection{Mapping from HP Formulation to our formulation} \label{sub:map}

Mapping the problem from a static space to a dynamic space can be quite tricky because of the missing time component in the static setting. In fact, static formulations lacking time makes them particularly difficult to map into dynamical problems.

Recapping from Def. \ref{def_pearl_framework:1} and Def. \ref{def_pearl_framework:2}, in HP framework, a causal model is formally defined as, $\mathcal{M}$ =  $(\mathcal{P}, \mathcal{F})$. $\mathcal{P}$ is a signature of tuple $(\mathcal{U}, \mathcal{V}, \mathcal{R})$, where $\mathcal{U}$ is a set of exogenous variables, $\mathcal{V}$ is a set of endogenous variables. $\mathcal{F}$ defines a function that associates with each endogenous variable $X$ a structural equation $F_X$ giving the value of $X$ in terms of the values of other endogenous and exogenous variables. Note that there are no functions associated with exogenous variables; their values are determined outside the model. We call a setting $\vec{u} \in \mathcal{R}(\mathcal{U})$ of values of exogenous variables a context. Because we are discussing causation in dynamical systems, we will restrict our discussion to strongly recursive (or strongly acyclic) causal models - where given context $\vec{u}$ the values of all remaining variables can be determined. 

In our process-based formulation, the model under discussion is a Markov Decision Process (MDP), formally defined as a tuple $M = (\mathcal{S}, \mathcal{A}, R, \mathcal{P}_0)$. $\mathcal{S}$ and $\mathcal{A}$ are sets of possible states and actions (intervenable inputs), $R: \mathcal{S} \mapsto \mathbb{R}$ is a scalar reward function, and $\mathcal{P}_0$ is the distribution of initial states. We consider the state ($S(t) \in \mathcal{S}$), to be an $n$-dimensional vector space and is either fully observable or reconstructable from observations. At any given time, each state component ($S_j(t)$) is a random variable and the state vector’s evolution across time forms a (stochastic) process. The evolution of state is, therefore, a function of both the intrinsic dynamics and the selected (extrinsic) actions across time. 
Therefore, the state component $S_{j}(t)$ can be seen as a function of other state and action components from prior times.  
Further, we say that $S(t)$ admits one or more known components $s_j$ at time $t$ iff $S_j(t) = s_j$. 
 
As mentioned earlier, mapping from a static space to a dynamic space can be quite tricky because of the missing time component in the static setting. However, given our discussion so far, we can forge some associations. 

The endogenous variables ($\mathcal{V}$) of the causal model $\mathcal{M}$ in HP setting can be analogous to the state variables of the MDP model $M$, but only at a certain time instance, $S(t)$ (n-dimensional vector of random variables). It's important to note that, since each time instance of any state variable is an endogenous variable, the causal graph in the static sense can grow immensely even for a small MDP - with only a few state variables and time steps. Because we are discussing strongly recursive causal models, the exogenous variables $\mathcal{U}$  in $\mathcal{M}$ can be thought of as providing the initial conditions, $\mathcal{P}_0$ in the MDP model $M$. 

In our framework, we formally define an event as a change of one or more state or action components during a homogeneous time interval. The components involved in an event $A$ are called ruling variables of event $A$, or $\mathcal{D}_A$. State admits event $A$ between time $[t,t']$ if, $\mathcal{D}_A$ admits a certain change of values between $[t,t']$, say $S_{\mathcal{D}_A}(t') = \vec{x}$ and $S_{\mathcal{D}_A}(t)$ admits values other than $\vec{x}$. 
In HP setting \citep{halpern2016actual}, given a causal model $\mathcal{M}$, a primitive event is a formula of the form $\vec{X}=\vec{x}$, for $X \in \mathcal{V}$ and $x \in \mathcal{R}(X)$. Since the HP definitions do not explicitly consider time in the definition of events, for simplicity let's assume that variables $\vec{X}$ admit values $\vec{x}$ only once during the observation period. 

Therefore, $\vec{X} = \vec{x}$ implies $\mathcal{D}_A$, admits a certain change of values between $[t,t']$, say $S_{\mathcal{D}_A}(t') = \vec{x}$ (implying that in this case, for simplicity, we consider that $S_{\mathcal{D}_A}(t)$ admits some value of no consequential interest here). 
Note that we define every event w.r.t to both ruling variables and an associated time interval. This clears much of the confusion around the time of happening of an event and provides the flexibility to study multiple events that share the same ruling variables and admit the same changes but occur at different points in time.    

As mentioned earlier, considering arbitrary policies for action selection, one may devise different chains of events after the cause event $A$. Following each such policy incurs a different probability of event $B$’s occurrence. In our setting, we examine if event $A$ causes $B$ under the most pessimistic version of such chains of events. In the HP setting, we have already seen that exogenous variables $\mathcal{U}$ provide the context under which the events occur. Therefore, we can also assume that exogenous variables in $\mathcal{U}$ along with providing initial condition $\mathcal{P}_0$ also define a fixed policy. So given a context, ($\vec{u} \in \mathcal{R}(\mathcal{U})$) the values of $\mathcal{A}$ are sampled based on a chosen policy. This is another distinction we hold to HP formulation, instead of considering actions under a fixed policy, restricting the chain of events between $A$ and $B$, we examine if event $A$ causes $B$ under the most pessimistic version of such chains of events. Remark that adhering to a certain chain of actions can be readily considered in our framework. In that case, the actions after event $A$ will simply become part of the dynamics and no longer exogenous variables. 

In all HP definitions, the endogenous variables are divided into two disjoint subsets $\vec{Z}$ and $\vec{W}$. In these settings, the endogenous variables $\vec{X}$ defining a event ($\vec{X}=\vec{x}$) are a subset of endogenous variables $\vec{Z}$ (i.e., $\vec{X} \subset \vec{Z}$). Let's call the remaining endogenous variables of $\vec{Z}$, as $\vec{Z'}$ ($\vec{Z}$ - $\vec{X}$). Further, $\vec{Z'}$ is a set of endogenous variables that form a causal path between $\vec{X} = \vec{x}$ and the effect event, $\vec{Y} = \vec{y}$. Therefore, the complete set of endogenous variables $\mathcal{V}$ can be broken into 3 disjoint subsets - $\mathcal{X}$, $\mathcal{W}$ and $\mathcal{Z'}$. 

As discussed before, endogenous variables can be mapped to the state variables of the MDP model $M$, but only at a certain time instance. For the purposes of this discussion, let's consider $\vec{X}=\vec{x}$ as event $A$ and $\varphi$ or $\vec{Y}=\vec{y}$ as event $B$. Therefore in our setting, $\vec{X}$ can be mapped to state components of ruling variables of the event of interest $A$ at time $t_A$, i.e., $S_{\mathcal{D}_A}(t_A)$. Similarly, $\vec{Z'}$ to $S_{\mathcal{D}_{Z'}}(t_{Z'})$. Since $\vec{Z'}$ are endogenous variables that form a causal path between event $\vec{X} = \vec{x}$ and the effect event, $\vec{Y} = \vec{y}$, we can assume that time of occurrence of these events $t_{Z'} \geq t_X$. Further, $\vec{W}$ can be denoted by state variables $S_{\mathcal{D}_W}(t_W)$. Since these variables do not form a causal path between cause and effect events, we can assume without harm that $t_W \leq t_X$. Even if some events defined by $\vec{W'}$, where $\vec{W'} \subset \vec{W}$, take place at $t_{W'} > t_X$, since they are not a part of the causal path between cause and effect events, they can safely be ignored for the purposes of our discussion. Similar conclusions can be drawn for the event $\vec{Y} = \vec{y}$, i.e, it is mapped to $S_{\mathcal{D}_B}(t_B)$. Since the effect event can only happen after the cause event $t_B > t_A$.  

\begin{table}
    \centering
    \begin{tabular}{|c|c|} \hline 
         Our Formulation &  HP Formulation\\ \hline 
          $S_{\mathcal{D}_A}(t_A)$ & $\vec{X}$\\  \hline 
          $S_{\mathcal{D}_B}(t_B)$ & $\vec{Y}$\\  \hline 
          $S_{\mathcal{D}_W}(t_W)$ & $\vec{W}$\\ \hline 
          $S_{\mathcal{D}_Z}(t_Z)$ & $\vec{Z}$\\ \hline 
          $S_{\mathcal{D}_{Z'}}(t_{Z'})$ & $\vec{Z'}$\\ \hline 
    \end{tabular}
    \caption{Mapping between variables in our framework and HP framework}
    \label{tab:my_label}
\end{table}
  
With these associations between the HP formulation and our formulation, we can move ahead to analyze the conditions of causation. 

\subsection{Compare and Contrast with HP definition}

Given the current understanding of the HP definitions, let us now compare and contrast the modified HP definition of actual causation (Def. \ref{def:HP_mod}) with our definition of causation (Def.\ref{def:causation}). 
For the purposes of this discussion, let's consider $\vec{X}=\vec{x}$ as event $A$ and $\varphi$ or $\vec{Y}=\vec{y}$ as event $B$. Also, since the HP definitions do not explicitly consider time in the definition of events, for simplicity let's assume that variables $\vec{X}$ admit values $\vec{x}$ only once during the observation period. The same applies to $\vec{Y}=\vec{y}$. From discussed mapping in \ref{sub:map}, $\vec{X} = \vec{x}$ implies $S_{\mathcal{D}_A}$, admits a certain change of values between $[t_A,t'_A]$, say $S_{\mathcal{D}_A}(t'_A) = \vec{x}$ (implying that in this case, for simplicity, we consider that $S_{\mathcal{D}_A}(t_A)$ admits some value of no consequential interest here). Similar logic can be applied to event $B$. 


\paragraph{Our Definition:} In a stochastic process, we define event $A$ to be a cause of event $B$ if and only if: 
\begin{enumerate}[C1.]
    \item Time-wise, conclusion of $A$ happens at or before beginning of $B$;
    \item Expected grit of $B$ strictly increases from before to after $A$. Moreover, until $B$'s occurrence, it never becomes the same or smaller than its value at $A$'s beginning; 
    \item The contribution of $A$'s ruling variables in the growth of $B$'s expected grit is strictly positive and is strictly larger in magnitude than that of non-ruling variables with negative impact.
\end{enumerate}

\paragraph{Condition AC1} AC1 represents the trivial requirement that the candidate cause and effect are among the events that took place. Our condition C1. implicity covers HP condition AC1 (Def. \ref{def:HP_mod}), adding additional clarity of direction of causation with a time arrow between the events. 

\paragraph{Condition AC2(a)} Condition AC2(a) states that for $\vec{X} = \vec{x}$ to be a cause, in causal model $\mathcal{M}$ with context $\vec{u}$ and witness $\vec{W}(u) = \vec{w}^*$, $\vec{Y} \neq \vec{y}$, if we set $\vec{X} = \vec{x}^{\prime}$ ($\vec{x}^{\prime} \in \mathcal{R}(\mathcal{\vec{X}})$ is value $\vec{X}$ does not take under the context $\vec{u}$). Note that $\vec{w}^{*}$ is the value $\vec{W}$ takes under $\vec{u}$ in causal model $\mathcal{M}$. This implies that there exist contrast values $\vec{x}^{\prime}$ such that if $\vec{X}$ is set to $\vec{x}^{\prime}$, $\varphi$ no longer holds. We further note that setting $\vec{X} = \vec{x}^{\prime}$ also implies possible changes in values of $\vec{Z}$ that form a causal path between $\vec{X}$ and $\varphi$, ($\vec{Y} = \vec{y}$).  

\textit{Mapping to our framework}: Condition AC2(a) implies that event $A$ can be a cause of event $B$ if, for any other value of the ruling variables( $\vec{X} = \vec{x}^{\prime}$) i.e,    $S_{\mathcal{D}_A}(t) = \vec{x}^{\prime}$  leads to non-occurance of event $B$, implying non-occurrence of event $A$. All this while holding events $S_{\mathcal{D}_W}(t_W)$ at the same values as they were when event $B$ occurs, while $S_{\mathcal{D}_{Z'}}(t_{Z'})$ can change in the subsequent time steps.   

\textit{Shortcomings}: This condition suffers from some major problems: 1) In a continuous setting, $\vec{x}^\prime$ can take infinitely many values even within the range of $\mathcal{R}(\mathcal{\vec{X}})$. Further setting $\vec{X} = \vec{x}^{\prime}$ (value not achievable under context $\vec{u}$) while holding $\vec{W} = \vec{w}^{*}$ (value achieved under context $\vec{u}$) while symbolically meaningful, can be impossible to achieve in most if not all dynamical systems of practical importance \citep{cartwright2007hunting}. For example, in the T1 diabetes example discussed in the experiment section, if event A is a spike in blood glucose levels to 120, what would event NOT A be? blood glucose spike to 80? 90? 100? Further, it would be impossible to make interventions and set the blood glucose to the desired values to understand their effect. When interventions are frequent and can take continuous values,  finding patients with similar statistics but different interventions - to act as counterfactuals,  would be like finding a needle in a haystack. 2) More importantly, under this condition, it is implied that when a cause event does not happen, then the effect event does not happen as well. This enforces that, for an event to be a cause it should also be a necessary cause. This contradicts our understanding of causation where we argue that an event can contribute partially to the happening of the effect without being necessary or sufficient. 
Our framework argues that, while a cause \textbf{can} be a necessary cause, it does not \textbf{have} to be a necessary cause to qualify as a possible cause of event $B$.  

\textit{Comparison}: We argue that the intent of condition AC2(a) in Def.\ref{def:HP_mod} is to understand the effect of event $A$ ($\vec{X} = \vec{x}$ or $S_{\mathcal{D}_A}(t_A) = \vec{x}$) in isolation. This condition looks at the effect of the non-happening of event $A$ ($\vec{X} = \vec{x}^\prime$ or $S_{\mathcal{D}_A}(t_A) = \vec{x}^\prime$) on event $B$, while witness $\vec{W}$ are held under the values of context $\vec{u}$, i.e, when event $B$ ($\vec{Y} = \vec{y}$) occurs. Our conditions C2 and C3 address this issue without having to take an interventionist approach to causation. In our formulation, 
we can compute the contribution of each event (defined by a subset of the state/action components over a homogenous time interval) towards increasing the minimum probability of the event $B$ happening. 
This helps us talk about the individual contribution of each event of interest without the need to hold $S_{\mathcal{D}_W}(t_W)$ at the value achieved under context $\vec{u}$ while admitting a different value at $S_{\mathcal{D}_A}(t_A)$, which might be physically impossible to achieve. 

\paragraph{Condition AC2(b)} The sufficiency condition, as discussed in condition AC2(b) of Modified HP (Def. \ref{def:HP_mod}) roughly speaking mentions that if the variables in $\vec{X}$ and an arbitrary subset $\vec{O}$ of other variables on the causal path are held at their values in the actual context $\vec{u})$ (i.e, $\vec{X}(u) = \vec{x}$ and $\vec{O}(u) = \vec{o}^*$), then $\varphi$ holds even if any subset of $\vec{W}$, $\vec{Q}$ is set to $\vec{q}^*$. Therefore, this condition implies that once event $\vec{X} = \vec{x}$ happens, then no matter what values the events on the causal path between event $\vec{X} = \vec{x}$ and event $\varphi$, ($\vec{Y} = \vec{y}$) take, event $\vec{Y} = \vec{y}$ still happens. 

\textit{Mapping to our framework}: Event $A$ can be a cause of event $B$ if, no matter the values the non-ruling variables (action or state variables), affected by event $A$ and present in the causal path between event $A$ and event $B$ take in the subsequent time steps, event $B$ will still happen. 

\textit{Shortcomings}: Similar to the previous case, this condition enforces that, for an event to be a cause it should also be a sufficient cause. This again contradicts our understanding of causation where we argue that an event can contribute partially to the happening of the effect without being necessary or sufficient. 

\textit{Comparisons}
Our framework argues that, while a cause \textbf{can} be a sufficient cause, it does not \textbf{have} to be a sufficient cause to qualify as a possible cause of event $B$.


\paragraph{Condition AC3} AC3 is also fairly straightforward: we should not consider redundant elements to be parts of causes.
Further, any possible redundancies in ruling variables can be examined and eliminated, because, their individual contributions to the growth of B's expected grit will be zero. Hence we can also satisfy condition AC3.

\clearpage

\section{Experiment 1: Atari Game of Pong}

Both our network architecture and the base pipeline have the same structure as the original DQN paper \cite{Mnih2015-dqn}. In particular, there are 3 convolutional layers followed by 2 fully-connected linear layers. The first convolutional layer has 32 $8\times 8$ filters with stride 4, the second 64 $4\times 4$ filters with stride 2, and the third and final convolutional layer contains 64 $3\times 3$ filters with stride 1. Then, the first linear layer has 512 inner nodes, and the next linear layer maps these to the number of actions. All layers except the last one are followed by ReLU nonlinearities (the last layer is just a linear layer). 
The state includes 4 consecutive frames, each of which is a downsized of the actual Atari screen into $84 \times 84$ pixels and then switched into grayscale. Thus, the actual state is a tensor of size $4\times 84\times 84$.

\textbf{Important point:} The screens shown in the main paper's Fig. \ref{fig:atari_full} illustrate the last frame of these four at each step. 

For the optimizer, we used Pytorch’s implementation of Adam optimizer with the Huber loss function, and the results are obtained after training over 200 epochs of 250,000 steps each (each action is repeated 4 times, hence each epoch involves one million Atari frames). All the hyper-parameters are chosen similarly to \cite{Mnih2015-dqn}. 

In the plots, $\nabla\Gamma (\bx)$ is colour-coded by red shades for $\nabla\Gamma (\bx) > 0$, blue shades for $\nabla\Gamma (\bx) < 0$ and white for zero, with darkest red for $\nabla\Gamma (\bx) \ge +1$ and darkest blue for $\nabla\Gamma (\bx) \le -1$. Values with $| \nabla\Gamma (\bx) | < 0.1$ are set to zero to denoise. The plots for $g$ are similar but using $0.05$ rather than $1$ to magnify the presentation. Additionally, $g$ plots only depict $g\ge 0$; since, by definition, only a positive change of grit induces a cause.  

Full details can be found in the \texttt{./atari} folder of the code, available at \url{https://github.com/fatemi/dynamical-causality}.

\section{Experiment 2: Type-1 Diabetes}

\subsection{Setup and Details}

We use an open-source implementation\footnote{\url{https://github.com/jxx123/simglucose}} of the FDA-approved Type-1 Diabetes Mellitus Simulator (T1DMS) \cite{kovatchev2009silico} for modeling the dynamics of Type-1 diabetes. In version is the very first release of the simulator and assumes inter-subject variability to be the same (parameters are sampled from a distribution with same covariance matrix) and does not model intra-day variability of patient parameters which they do in future iterations \cite{man2014uva, visentin2018uva}.

The simulator models an insilicopatient’s blood glucose level (BG) and 12 other body dynamics with real-valued elements representing glucose 
and insulin values in different compartments of the body. The glucose dynamics are captured by - plasma glucose, tissue glucose, glucose in stomach 1 (GS1), glucose in stomach 2  and gut. The insulin dynamics are captured by - insulin on glucose production, insulin on glucose utilization, insulin action on liver, plasma insulin, liver insulin, sub-cutaneous insulin-1 (SI) and sub-cutaneous insulin-2.  
We control two actions: 1) Insulin intake, to regulate the amount of insulin 2) Meal intake, to regulate the amount of carbohydrates. 

Meal intake increases the amount of glucose in the bloodstream, for T1D patients, when unregulated without external insulin, this can lead to hyperglycemia - generally characterized by blood glucose levels shooting over 180 mg/dL. Tight blood glucose control with insulin injections can help, but intensive control of blood glucose with insulin injections can increase the risk of hypoglycemia - generally characterized by blood glucose levels less than 70 mg/dL.

For our experiment, we study the event of hypoglycemia (BG < 70 mg/dL).  

\paragraph{Insulin Dosing and Intake:} Insulin dosing in T1D will vary based on the patient's age, weight, and residual pancreatic insulin activity. T1D patients will typically require a total daily insulin dose of 0.4 - 1.0 units/kg/day. For example, if a patient weighs 80 kg, the total daily dose = 80 kg X (0.5 units/kg/d) = 40 units per day. Typically this insulin is split into basal and bolus insulin. Usually, basal insulin is the insulin taken to keep the blood glucose in a steady state when there is no meal intake. It is generally regulated through an insulin pump. Bolus insulin on the other hand is taken through insulin injections to explicitly regulate the rise in glucose levels with meal intake. For the purpose of our simulation, we focus on bolus insulin intake only. We consider a constant intake of 0.027 units/min of basal insulin. For our experiment bolus insulin intake takes on values from $A_{ins} = \{0,3,7,15\}$ units. 

We note that since this version of the simulator does not model the intra-day variability of the patient parameters, the patient reaction to a given action is not dependent on the time of the day, i.e, 3 units of insulin intake at noon and 3 units of insulin intake at 6 pm should have the same effect on patient dynamics. To generate realistic daily insulin scenarios, we use a random scenario generator. Each scenario is associated with the time of intake, amount of intake, and probability of intake.  
To make the daily insulin intake more realistic, we add some stochasticity to the time of insulin intake, by modeling it with a truncated normal distribution, $TN(\mu, \sigma, lb, ub)$ centered around time $\mu$ with $\sigma$ variance, $lb$ lower bound and $ub$ upper bound. choosing different probabilities of consumption helps generate realistic scenarios of excessive insulin intake leading to hypoglycemia, or excessive meal intake in the absence of insulin leading to hyperglycemia. We design six possible intakes of insulin roughly capturing insulin intakes at breakfast, snack1, lunch, snack2, dinner, and snack3.   

\begin{equation}
    \begin{array}{lll}
    \text {Time} & \text {Amount (in units)} &  \text{Probability of Consumption} \\
    T N(3,1,1,5) & 7 & \{0.95,1\} \\
    T N(5.5,.5,5,6) & 3 & 0.3\\
    T N(8,1,6,10) & 15 &  1\\
    T N(11,.5,10,12) & 3 & 0.3\\
    T N(14,1,12,16) & 15 & 1\\
    T N(17.5,1,16,19) & 3 & 0.3 
    \end{array}
\end{equation}

\paragraph{Meal Intake:} In our simulation, meal intake can take 4 different possible values, $A_{meal} = \{0,5,30,60\}$ grams. Similar to insulin intake, to generate realistic daily meal consumption scenarios, we use a random scenario generator. Each scenario is associated with the time of intake, amount of intake, and probability of intake.  
To add some stochasticity to the time of meal intake, by modeling it with a truncated normal distribution, $TN(\mu, \sigma, lb, ub)$  centered around time $\mu$ with $\sigma$ variance, $lb$ lower bound and $ub$ upper bound. We design six possible intakes of meals roughly capturing breakfast, snack1,  lunch, snack2, dinner, and snack3.

\begin{equation}
    \begin{array}{lll}
    \text { Time } & \text { Amount (in grams) }  &  \text{Probability of Consumption}\\
    T N(3.5,1,1,5) & 30 & \{0.95, 0\}\\
    T N(6,.5,5,6) & 5 & \{0.3, 0.5\}\\
    T N(8.5,1,6,10) & 60 & \{0.5, 0, 1\} \\
    T N(11.5,.5,10,12) & 5 & 0.3\\
    T N(14.5,1,12,16) & 50 & \{0.5, 1\}\\
    T N(18,1,16,19) & 5 & 0.95
    \end{array}
\end{equation}

We map every combination of insulin and carbohydrate intake (each of which takes 4 distinct values) into 16 different possible actions. For the purpose of our experiment, we sample from a single patient trajectory (adult003) over 24 hours with a 1-minute sampling interval. We start from the same initial conditions because the system becomes chaotic for different perturbations in initial conditions.

For our study, we use Monte Carlo estimation to estimate $V_{\Gamma}(\bX)$ and therefore $\Gamma_{B}(\bX)$. Given the exploration policy of the off-line data we generate using the simulator is close to optimal, we can safely assume this gives us $\Vg(\bX)$. We set $\gamma=1$ with no positive rewards, i.e., $r=-1$ if $BG<70$ and zero otherwise. We terminate the episode if we hit either hyperglycemia (BG>180), or hypoglycemia (BG<70) or we reach the end of 24 hours. We use Pytorch's \textit{autograd} to compute the value function's gradient w.r.t. different body dynamics, based on which we could compute g-formula with $M=50$ computational micro-steps to compute the integral. In our setting, we observe that intake of insulin causes an instantaneous spike in dynamics of subcutaneous insulin 1 (SI1), while intake of carbohydrates causes an instantaneous spike in glucose in stomach 1 (GS1). Since the previous action is considered a part of the current state to keep the system Markovian, the spike in subcutaneous insulin just acts as a proxy for action insulin, similarly for action meal and GS1. 

We use a simple deep network with 3 fully connected layers with GELU (Gaussian error linear unit) activation. In particular, we have $13 \times 30$, $30 \times 30$, and $30 \times 1$ fully connected layers. The actual state is a tensor of $13 \times 1$. Since the signals have very different ranges, we normalize them (with population mean and standard deviation) before passing them through the network.  

We use a learning rate of 0.00001 and minibatch size of 128. In each minibatch, we select 64 transitions by sampling from a prioritized experience replay (PER) butter \cite{schaul2015prioritized}. For the remaining 64 samples, we choose 32 samples uniformly from the train data and append it with
6 uniformly selected event B, hypoglycemia transitions ($r=-1$ and terminal state = True), 2 uniformly selected hyperglycemia transitions ($r=0$ and terminal state = True), and remaining samples are sampled from non-zero action samples. All other chosen  hyper-parameters can be found in the \texttt{config.yaml} file in the root directory of our code.

\subsection{System Dynamics Model}

We use the ODE's provided by \cite{kovatchev2009silico} to generate the signals. We notice that some of these are different from the ones provided in the open-source implementation repository \texttt{simglucose} and make the changes accordingly in our implementation. 

\subsubsection{The oral glucose subsystem that controls rate of glucose appearance}

Stomach compartment 1 
\begin{equation}
\begin{aligned}
x^{1}(t) &= \dot{Q}_{sto1}(t) =-k_{max} \cdot Q_{s t o 1}(t)+ D \\
D &= \text{CHO - carbohydrate intake from meal consumption} \\
\end{aligned}
\end{equation}
\vspace{2mm}

Stomach compartment 2 
\begin{equation}
\begin{aligned}
x^{2}(t) = \dot{Q}_{sto2}(t) 
& = -k_{gut}\left(Q_{sto}\right) \cdot Q_{sto2}(t)+k_{max} \cdot Q_{sto1}(t) \\
\end{aligned}
\end{equation}
\vspace{2mm}

Gut 
\begin{equation}
    \begin{aligned}
x^{3}(t) = \dot{Q}_{g u t}(t) &=-k_{a b s} \cdot Q_{g u t}(t)+k_{gut}\left(Q_{s t o}\right)
\end{aligned}    
\end{equation}
\vspace{2mm}

\begin{equation}
    \begin{aligned}
Q_{s t o}(t)  =  Q_{s t o 1}(t)+Q_{s t o 2}(t) 
    \end{aligned}
\end{equation}

\text{Glucose rate of appearance}
\begin{equation}
    \begin{aligned}
R a(t) = \frac{f \cdot k_{a b s} \cdot Q_{g u t}(t)}{B W}        
    \end{aligned}
\end{equation}

\subsubsection{Glucose subsystem - glucose kinetics}

Plasma Glucose 
\begin{equation}
\begin{aligned}
x^{4}(t) = \dot{G}_{p}(t)= \max (EGP(t),0)+R a(t)-U_{i i}(t)-E(t)-k_{1} \cdot G_{p}(t)+k_{2} \cdot G_{t}(t) \\
\end{aligned}
\end{equation}
\vspace{2mm}

Tissue Glucose 
\begin{equation}
\begin{aligned}
x^{5}(t) = \dot{G}_{t}(t) =-U_{i d}(t)+k_{1} \cdot G_{p}(t)-k_{2} \cdot G_{t}(t) \\
\end{aligned}
\end{equation}
\vspace{2mm}

Subcutaneous Glucose
\begin{equation}
\begin{aligned}
x^{13}(t) =
\dot{G}_{s}(t)=-\frac{1}{T_{s}} \cdot G_{s}(t)+\frac{1}{T_{s}} \cdot G_{p}(t) \\
\dot{G}_{s}(t) = ({G}_{s}(t) \geq 0)(\dot{G}_{s}(t))\\
G(t) =  G_{p}(t) / V_{G}
\end{aligned}
\end{equation}
\vspace{2mm}

\begin{equation}
\begin{aligned}
\text{Endogenous glucose production}\\
 E G P(t) & =k_{p 1}-k_{p 2} \cdot G_{p}(t)-k_{p 3} \cdot X_{L}(t)\\
 \text{Insulin independent utilization}\\
 U_{ii}(t) &=F_{c n s}\\
 \text{Insulin dependent utilization}\\
 U_{id}(t) &=\frac{\left(V_{m 0}+V_{m x} \cdot X(t) \right)\cdot G_{t}(t)}{K_{m 0}+G_{t}(t)}\\
\end{aligned}
\end{equation}
\vspace{2mm}

\subsubsection{Insulin kinetics}

Plasma Insulin 
\begin{equation}
\begin{aligned}
x^{6}(t) = \dot{I}_{p}(t) &=-\left(m_{2}+m_{4}\right) \cdot I_{p}(t)+m_{1} \cdot I_{l}(t)+k_{a 1} \cdot I_{s c 1}(t)+k_{a 2} \cdot I_{s c 2}(t) \\
\end{aligned}
\end{equation}
\vspace{2mm}

Insulin action on glucose utilization
\begin{equation}
\begin{aligned}
x^{7}(t) = \dot{X}(t) &=-p_{2 U} \cdot X(t)+p_{2 U}\left(I(t)-I_{b}\right) \\
\end{aligned}
\end{equation}
\vspace{2mm}

Delayed insulin action in the liver 
\begin{equation}
\begin{aligned}
x^{8}(t) = \dot{X}_{L}(t) &=-k_{i} \cdot\left(X_{L}(t)-I(t)\right)\\
\end{aligned}
\end{equation}
\vspace{2mm}

Insulin action on glucose production
\begin{equation}
\begin{aligned}
I(t)=\frac{I_{p}(t)}{V_{I}}\\
x^{9}(t) = \dot{\tilde{I}}(t) &=-k_{i} \cdot(\tilde{I}(t)-I(t))\\
\end{aligned}
\end{equation}
\vspace{2mm}

Liver insulin 
\begin{equation}
\begin{aligned}
x^{10}(t) = \dot{I}_{l}(t) &=-\left(m_{1}+m_{3}\right) \cdot I_{l}(t)+m_{2} \cdot I_{p}(t) \\
\end{aligned}
\end{equation}

Subcutaneous insulin 1
\begin{equation}
\begin{aligned}
x^{11}(t) = \dot{I}_{sc1}(t) & =-\left(k_{d}+k_{a1}\right) \cdot I_{sc1}(t) + Insulin(t)
\end{aligned}
\end{equation}

Subcutaneous insulin 2
\begin{equation}
\begin{aligned}
x^{12}(t) = \dot{I}_{s c 2}(\mathrm{t}) = k_{d} \cdot I_{s c 1}(\mathrm{t})-k_{a 2} \cdot I_{s c 2}(\mathrm{t})\\
\end{aligned}
\end{equation}





\end{document}